\documentclass[12pt]{elsarticle}   	
\usepackage{diagbox}
\usepackage{booktabs}
\usepackage{xcolor} 
\usepackage{tikz}
\usepackage{setspace}

\usepackage{algorithm}
\usepackage{algpseudocode}

\usepackage{amsmath}

\usepackage{caption}
\usepackage{subcaption}
\usepackage{array}
\usepackage{mathrsfs}
\usepackage{ctex}
\usepackage{diagbox}
\usepackage[english]{babel}
\usepackage{amsthm}

\usepackage{enumitem}

\usepackage{graphicx}
\usepackage{amssymb}
\usepackage{subcaption}
\usepackage{multirow}

\usepackage{amsthm} 

\usepackage{amsmath}
\allowdisplaybreaks[1]

\usepackage{hyperref}
\usepackage{pifont}
\usepackage{tikz}
\usetikzlibrary{positioning}
\usepackage{tikz}
\usetikzlibrary{shapes.geometric, arrows, patterns}
\usetikzlibrary{arrows, decorations.markings}
\usepackage{graphicx}

\usepackage{textcomp}
\hypersetup{
	colorlinks=true,
}

\newtheorem{proposition}{Proposition}[section]

\newtheorem{example}{Example}[section]

\usepackage{adjustbox}      
\usepackage{booktabs}       
\usepackage{makecell}       
\usepackage{placeins}       
\usepackage{ragged2e}
\usepackage{float} 
\usepackage{geometry}
\usepackage[font=footnotesize,labelfont=bf,labelsep=none]{caption}

\usepackage{bm}

\usepackage{amsmath}
\usepackage{relsize}
\usepackage{tabularx}
\usepackage{amsmath, amsthm}

\newtheorem{definition}{Definition}[section]
\usepackage{booktabs} 
\usepackage{tabularx}
\usepackage{multirow}
\usepackage{amsfonts} 
\usepackage{caption}
\journal{AAA}
\usepackage{rotating}
\usepackage{threeparttable}
\begin{document}
\begin{sloppypar}
\begin{frontmatter}
\title{Feasible strategies for conflict resolution within intuitionistic fuzzy preference-based conflict situations}	

\author[A,B]{Guangming Lang}
\ead{langguangming1984@126.com}
\author[A,B]{Mingchuan Shang}
\ead{shangmingchuan2025@163.com}
\author[C]{Mengjun Hu}
\ead{mengjun.hu@umanitoba.ca}
\author[A,B]{Jie Zhou}
\ead{jie_jpu@163.com}
\author[A,B]{Feng Xu\corref{cor1}}
\cortext[cor1]{Corresponding author.}
\ead{fengxu_csust@163.com}
				
\address[A]{School of Mathematics and Statistics, Changsha University of Science and Technology, Changsha, Hunan, 410114, P.R. China}
\address[B]{Hunan Provincial Key Laboratory of Mathematical Modeling and Analysis in Engineering, Changsha University of Science and Technology, Changsha, Hunan, 410114, P.R. China}
\address[C]{Department of Computer Science, University of Manitoba, Winnipeg, R3T 5V6, Canada}

\begin{abstract}
In three-way conflict analysis, preference-based conflict situations characterize agents' attitudes towards issues by formally modeling their preferences over pairs of issues. However, existing preference-based conflict models rely exclusively on three qualitative relations, namely, preference, converse, and indifference, to describe agents' attitudes towards issue pairs, which significantly limits their capacity in capturing the essence of conflict. To overcome this limitation, we introduce the concept of an intuitionistic fuzzy preference-based conflict situation that captures agents' attitudes towards issue pairs with finer granularity than that afforded by classical preference-based models. Afterwards, we develop intuitionistic fuzzy preference-based conflict measures within this framework, and construct three-way conflict analysis models for trisecting the set of agent pairs, the agent set, and the issue set. Additionally, relative loss functions built on the proposed conflict functions are employed to calculate thresholds for three-way conflict analysis. Finally, we present adjustment mechanism-based feasible strategies that simultaneously account for both adjustment magnitudes and conflict degrees, together with an algorithm for constructing such feasible strategies, and provide an illustrative example to demonstrate the validity and effectiveness of the proposed model.
\end{abstract}
			
\begin{keyword}
Three-way decisions \sep Three-way conflict analysis \sep Feasible strategy \sep Intuitionistic fuzzy preference  
\end{keyword}
\end{frontmatter}

\section{Introduction}
\label{sec:introduction}	

Conflict inevitably arises from divergences in interests and information among individuals and groups, and constitutes a fundamental and pervasive phenomenon in human societies. Beyond its disruptive aspects, conflict frequently plays a constructive role in promoting social change and driving institutional development. As stated by Thomas Crum~\cite{crum1999magic}, conflict constitutes a primary driving force of social change. Owing to its profound influence on individual behavior and collective dynamics, conflict has attracted sustained scholarly attention across a broad range of disciplines, including philosophy~\cite{appiah2020concept} and management science~\cite{huo2023conflict}.
		
Granular computing~\cite{fujita2018resilience,pedrycz2018granular,pawlak1982rough} is a computational paradigm that models, analyzes, and solves complex problems by organizing information into granules. It provides a novel and effective approach for characterizing and quantifying conflict problems. Building on this foundation, many scholars~\cite{dou2024three, suo2022three,wang2023three, li2024dynamic} have sought to develop formal mathematical frameworks for modeling and analyzing conflicts.
For example, Pawlak~\cite{pawlak1984conflicts, pawlak1998inquiry} utilized a three-valued situation table to characterize conflict problems, and built a theoretical framework for conflict analysis.
After that, three-way decisions~\cite{yao2020three,yao2016three,yao2017interval,yao2018three} offers a new perspective for addressing uncertain problems linked to conflict analysis. Building on this paradigm, Yao~\cite{yao2019three} proposed a framework of three-way conflict analysis and extended the three-valued situation table to a multi-valued one. Lang, Miao and Cai~\cite{lang2017three} introduced the concepts of probabilistic conflict, neutral, and allied sets, and computed trisection thresholds using decision-theoretic rough sets. Luo et al.~\cite{luo2022three} defined alliance and conflict functions to address semantic ambiguities arising from aggregation operations. Lang and Yao~\cite{lang2023formal} explored the connection between agent coalitions and issue bundles from the perspective of formal concept analysis. Zhi, Li and Li~\cite{zhi2022multilevel} established a multi-level conflict analysis framework from the perspective of fuzzy formal concept analysis, while Ren et al.~\cite{ ren2024tri} proposed a tri-level conflict analysis model grounded in three-valued formal concept analysis to explore nuanced neutral attitudes.

In three-way conflict analysis, numerous scholars~\cite{li2023three,Chen2025,Zhang2025,liu2025graph} have investigated feasible strategies for conflict resolution from two perspectives: non-adjustment and adjustment. The former seeks to achieve consensus on multiple issues without changing agents' attitudes, while the latter aims to resolve conflicts by adjusting agents' attitudes.
From the non-adjustment perspective, Sun, Ma and Zhao~\cite{sun2016rough} introduced a conflict matrix to analyze the causes of conflicts and identified feasible strategies for conflict resolution. Sun et al.~\cite{sun2020three} established a feasible consensus strategy based on three-way decisions and probabilistic rough sets over two universes. Xu et al.~\cite{xu2022selection} designed a consistency measure to select multiple issues with high consistency as feasible strategies. Liu, Hu and Lang~\cite{liu2025feasible} identified the optimal feasible strategy for a clique of agents based on weighted consistency and inconsistency measures.
Du et al.~\cite{du2022novel} selected issue subsets with low conflict degrees as feasible strategies and employed evaluation functions to identify optimal solutions.
Chen, Xu and Pedrycz~\cite{chen2023conflict} defined threshold stability within a graph model to identify equilibrium states for conflict resolution, and Liu et al.~\cite{liu2025multi} further proposed a graph-based model with multi-attribute evaluation-based mixed stability.
From the adjustment perspective, Tong et al.~\cite{tong2021trust} employed a trust-based mechanism with four feedback conditions to modify agents' attitudes and achieve consensus. 
Gao, Yang and Guo~\cite{gao2025three} proposed conflict resolution methods that adjust preference ratings to form maximal coalitions including as many agents as possible. Jiang, Liu and An~\cite{jiang2025super} developed an optimization model that simultaneously considers group satisfaction and alliance loss in conflict resolution.

Fuzzy sets, introduced by Zadeh~\cite{Zadeh_1965} in 1965, provides a natural and flexible method of modeling vagueness and uncertainty in conflict problems. For example, Li et al.~\cite{li2021conflict} conducted three-way conflict analysis utilizing triangular fuzzy numbers to represent agents' attitudes on issues. Liu and Lin~\cite{liu2015intuitionistic} defined a conflict distance based on intuitionistic fuzzy rough sets. Yi et al.~\cite{yi2021three} and Feng, Yang and Guo~\cite{feng2023three} utilized hesitant fuzzy numbers to characterize agents' attitudes, and investigated three-way conflict analysis models within hesitant fuzzy situation tables. Lang, Miao and Fujita~\cite{lang2019three}, as well as Lang et al.~\cite{ lang2024trisection}, proposed three-way conflict analysis models with Pythagorean fuzzy information. Tang, Chen and Rao~\cite{tang2025distance} introduced fuzzy multiset-based three-way conflict analysis model, while Tang et al.~\cite{tang2025information} further proposed three-way conflict analysis models based on multi-source fuzzy data to overcome the limitations inherent in single-source data-based approaches.
In addition, Yang et al.~\cite{Yang} proposed a three-way conflict analysis model within a hybrid situation table, while Lu, Yang and Guo~\cite{lu2025three} investigated that how to select the optimal scale in multi-scale situation tables.
Although agents' attitudes can be represented in multiple forms, most existing studies predominantly rely on agents' ratings on individual issues, a representation that is inherently limited in its ability to capture the uncertainty and nuance underlying agents' attitudes.
Recently, Hu~\cite{hu2025three} proposed a preference-based three-way conflict analysis model, in which inter-agent relationships includes agreement, partial agreement, and disagreement, and the conflict degree between two agents is determined by three binary relations, namely, preference, converse, and indifference. However, these relations exhibit limited expressive power in capturing nuanced and uncertain attitudes. To overcome this limitation, Lang, Liu, and Hu~\cite{lang2025ija} utilized multi-level preferences to more precisely characterize agents' attitudes towards issue pairs, in contrast to traditional single-level preferences, while Shang and Lang~\cite{shang2025three} adopted fuzzy preferences to represent agents' attitudes towards issue pairs, and developed a three-way conflict analysis model within a fuzzy preference-based conflict situation.
To more comprehensively capture agents' attitudes, we introduce an intuitionistic fuzzy preference-based conflict situation in which agents' preferences are modeled using intuitionistic fuzzy numbers, and develop intuitionistic fuzzy preference-based models for three-way conflict analysis, thereby facilitating a more comprehensive and flexible analysis of agents' attitudes and the development of feasible strategies for conflict resolution.

The main contributions of this study are summarized as follows. First, intuitionistic fuzzy preferences are introduced to describe agents' attitudes towards issue pairs. On this basis, intuitionistic fuzzy preference-based conflict situations and corresponding conflict measures for quantifying conflict degrees are formally defined, and their fundamental properties are investigated. Second, intuitionistic fuzzy preference-based models of three-way conflict analysis are developed to trisect agent pairs, the agent set, and the issue set, and a threshold determination method grounded in decision-theoretic rough sets is proposed. Third, adjustment mechanism-based feasible strategies for conflict resolution are formulated by simultaneously considering adjustment magnitudes and conflict degrees, along with an algorithm for computing the corresponding feasible strategies. An illustrative example is presented to demonstrate the validity and effectiveness of the proposed approach.

The remainder of this study is organized as follows. Section~\ref{sec:Hu's preference-based conflict model} reviews the preference-based conflict analysis model.  Section~\ref{sec:intuitionisticfuzzypreference-basedmodel} introduces the intuitionistic fuzzy preference-based model of three-way conflict analysis. Section~\ref{sec:conflictresolution} provides adjustment mechanism-based feasible strategies for conflict resolution and demonstrates their feasibility and effectiveness through an illustrative example. Section~\ref{sec:Conslusion} concludes this study and outlines directions for future research.

\section{A Review of Preference-Based Three-Way Conflict Analysis Model}
\label{sec:Hu's preference-based conflict model}

This section reviews some concepts of the preference-based three-way conflict analysis model~{\cite{hu2025three}}, including the preference, converse, and indifference relations, the conflict function, and the trisection of agent pairs.

\begin{definition}\label{Hu'spreference-basedsituation}(Hu {\cite{hu2025three}}, 2025)
A preference-based conflict situation is a triplet $PS=(A,I,\{\succ ^I_a\mid a\in A\})$, where $A$ is a finite nonempty set of agents, $I$ is a finite nonempty set of issues, and $\succ^{I}_{a}$ denotes the preference relation of agent $a\in A$ over the issue set $I$, defined as: 
\begin{align}
\succ ^I_a &=\{(i,j)\in I\times I\mid a\ supports\ i\ more\ than\ j \}.
\end{align}
Moreover, the converse relation \( \prec^J_a \) and the indifference relation \( \sim^I_a \)  over the issue set $I$ of agent $a\in A$ are defined by:		
\begin{align}
\prec ^I_a &=\{(i,j)\in I\times I\mid a\ supports\ j\ more\ than\ i \},	\\
		\sim^I_a &= \{(i,j) \in I\times I \mid \neg (i \succ^I_a j) \wedge \neg (j \succ^I_a i)\}.
\end{align}
\end{definition}
		
For two issues $i_1$ and $i_2$, $(i_1, i_2) \in \succ^I_a$ indicates that agent $a$ prefers $i_1$ to $i_2$; $(i_1, i_2) \in \prec^I_a$ indicates that agent $a$ prefers $i_2$ to $i_1$; and $(i_1, i_2) \in \sim^I_a$ indicates that agent $a$ is indifferent between $i_1$ and $i_2$.

\begin{definition}\label{Hu'sconflictfunction}(Hu {\cite{hu2025three}}, 2025)\label{Conflictdegreeonissuepair}
Let $PS=(A,I,\{\succ^I_a\mid a\in A\})$ be a preference-based conflict situation, the conflict function $C_{ij}: A\times A\longrightarrow R$ towards issue pair $(i,j)\in I\times I$ is defined by: for two agents $a,b\in A$,
	\begin{equation}
		C_{ij}(a,b) = 
		\begin{cases}
			\delta^=, & \text{ if } (i\succ_a^I j \land i \succ_b^I j) \lor (i  \sim_a^I j \land i  \sim_b^I j) \lor (j\succ_a^I i \land j \succ_b^I i), \\
			\delta^\approx, & \text{ if } i  \sim_a^I j \oplus i  \sim_b^I j,\\
			\delta^\asymp, &  \text{ if } (i\succ_a^I j \land j \succ_b^I i) \lor (j \succ_a^I i \land i \succ_b^I j),
		\end{cases}
	\end{equation}
where $\oplus$ denotes logical exclusive or. 
\end{definition}

The three values $\delta^{=}$, $\delta^{\approx}$, and $\delta^{\asymp}$ satisfy \(\delta^{=} < \delta^{\approx} < \delta^{\asymp}\) and \(\delta^{=} \leqslant 0 < \delta^{\asymp}\), where $\delta^{=}$, $\delta^{\approx}$, and $\delta^{\asymp}$ represent the conflict degrees corresponding to agreement, partial agreement, and disagreement, respectively. The detailed specifications of conflict degrees towards issue pairs are summarized in Table~\ref{table:threerelationshipbetweentwoagents}.		
		
\begin{table}[H]
	\centering
	\caption{The conflict degree between two agents.}
	\label{table:threerelationshipbetweentwoagents}
	\setlength{\tabcolsep}{10pt}  
	\adjustbox{width=\textwidth}{ 
		\begin{tabular}{llc}
			\toprule
			$\mathrm{Relationship}$ & Preferences of agent $a$ and agent $b$ & $ {C_{ij}(a,b) }$ \\
			\midrule
			Agreement & $(i\succ_a^I j \land i \succ_b^I j) \lor (i  \sim_a^I j \land i  \sim_b^I j) \lor (j\succ_a^I i \land j \succ_b^I i)$ & $\delta^=$ \\
			Partial agreement & $i  \sim_a^I j \oplus i  \sim_b^I j$ & $\delta^{\approx}$  \\
			Disagreement & $(i\succ_a^I j \land j \succ_b^I i) \lor (j \succ_a^I i \land i \succ_b^I j)$ & $\delta^{\asymp}$  \\
			\bottomrule
		\end{tabular}
	}
	\\[5pt] 
	\small  
	\raggedright 
\end{table}

\begin{definition}\label{Hu's conflictfunction}(Hu {\cite{hu2025three}}, 2025)\label{Conflictdegreeonissueset}
Let $PS=(A,I,\{\succ ^I_a\mid a\in A\})$ be a preference-based conflict situation, the conflict function $C_I: A\times A\longrightarrow R$ towards the issue set I is defined by: for two agents $a,b\in A$,
\begin{align}
C_I(a,b)&=\delta^= ( \left| \succ_a^I \cap \succ_b^I \right| + \frac{\left| \sim_a^I \cap \sim_b^I \right| + \left| I \right|}{2} ) 
+ \delta^\approx ( \left|\succ_a^I \cap \sim_b^I \right| + \left| \sim_a^I \cap\succ_b^I \right| )\notag \\&\hspace{1.0em} + \delta^\asymp ( \left| \succ_a^I \cap 	\prec _b^I \right| ),
\end{align}
where $|\cdot |$ denotes the cardinality of a set.
\end{definition}

For two agents $a,b\in A$, the value $C_I(a,b)$ quantifies the conflict degree between two agents $a$ and $b$ with respect to the issue set $I$. Moreover, Hu~\cite{hu2025three} utilized the max-min normalization to standardize this conflict degree given by Definition~\ref{Hu'sconflictfunction} to the interval $[0,1]$.

\begin{definition}\label{Hu'sScoflictfunction}(Hu {\cite{hu2025three}}, 2025)\label{D3}
Let $PS=(A,I,\{\succ ^I_a\mid a\in A\})$ be a preference-based conflict situation,  the normalized conflict function $NC_I: A\times A\longrightarrow[0,1] $ towards the issue set I is defined by: for two agents $a,b\in A$,
\begin{align}
NC_I(a,b)=\frac{C_I(a,b)-min\ C_I}{max\ C_I-min\ C_I},
\end{align}
where $min\ C_I=\frac{\delta^{=}|I|(|I|+1)}{2}$ and $max\ C_I=\frac{\delta^{\asymp}|I|(|I|+1)}{2}$.
\end{definition}

Based on the normalized conflict function, Hu~\cite{hu2025three} scaled the conflict degree between two agents to the interval $[0,1]$ and subsequently categorized all agent pairs into alliance, neutrality, and conflict relations with respect to the set of issues.

\begin{definition}\label{Hu'strisection}(Hu {\cite{hu2025three}}, 2025)
Let $PS=(A,I,\{\succ ^I_a\mid a\in A\})$ be a preference-based conflict situation, and a pair of thresholds $(\zeta_*,\zeta^*)$ such that $0\leqslant\zeta_*<\zeta^*\leqslant1$. Then the alliance relation $\mathbb{R}^=_I$, neutrality relation $\mathbb{R}^\approx_I$, and conflict relation $\mathbb{R}_{I}^\asymp$ on A with respect to the issue set $I$ are defined by:
\begin{align}
\mathbb{R}^{=}_I &= \{(a,b) \in A \times A \mid C_I^*(a,b) < \zeta_*\},\\
\mathbb{R}^{\approx}_I &= \{(a,b) \in A \times A \mid \zeta_* \leq C_I^*(a,b) \leq \zeta^*\},\\
\mathbb{R}^\asymp_I &= \{(a,b) \in A \times A \mid C_I^*(a,b) > \zeta^*\}.
\end{align}
\end{definition}

In Hu's work, the conflict degree between two agents is inferred from their preference relations. Nevertheless, the presence of uncertainty in agents'  preferences may result in inconsistent or unstable outcomes for three-way conflict analysis, and the topic that how to accurately represent their preferences has not been thoroughly explored in the existing literature. In what follows, we will characterize agents' attitudes towards issue pairs using intuitionistic fuzzy preferences and extend Hu's preference-based conflict analysis model accordingly.

\section{Intuitionistic Fuzzy Preference-Based Model of Three-Way Conflict Analysis}	\label{sec:intuitionisticfuzzypreference-basedmodel}

In this section, we characterize   agents' attitudes towards issue pairs through intuitionistic fuzzy preferences, and provide intuitionistic fuzzy preference-based models of three-way conflict analysis.

\subsection{Intuitionistic Fuzzy Preference-Based Conflict Situation}

We first introduce an intuitionistic fuzzy preference-based conflict situation.

\begin{definition}\label{IFPCS}
An intuitionistic fuzzy preference-based conflict situation is  defined by a triplet $IFPS=(A,I,\{\widetilde{R}_a\mid a\in A\})$, where $A=\{a_1,a_2,\dots,a_n\}$ is a set of agents, $I=\{i_1,i_2,\dots,i_m\}$ is a set of issues, $\widetilde{R}_a=(\widetilde{r}_a(i,j))_{m\times m}$ is the set of intuitionistic fuzzy preferences of agent $a\in A$ towards all pairs of issues, in which 
\begin{itemize}
\setlength{\itemsep}{0pt} %
\item $\widetilde{r}_a(i,j)=(\mu_a(i,j),\nu_a(i,j))$ is an intuitionistic fuzzy number; 
\item $\mu_a(i,j)=\nu_a(j,i)$ and $\mu_{a}(i,i)=\nu_{a}(i,i)=0.5$.
\end{itemize}	
\end{definition}
	
For the first item, $\mu_a(i,j)$ denotes the degree to which an agent prefers issue 
$i$ to issue $j$, and $\nu_a(i,j)$ represents the degree of non-preference (or converse preference).
For the second item, the preference degree of agent $a$ on issue pair $(i,j)$ is equal to the non-preference degree of agent $a$ on issue pair $(j,i)$, and $\mu_{a}(i,i)=\nu_{a}(i,i)=0.5$, which aligns with human cognition. In addition, the value $\pi_a(i,j)=1-\mu_a(i,j)-\nu_a(i,j)$ stands for the preference hesitation degree of agent $a$ towards issue pair $(i,j)$.
For convenience, we assume that $(\widetilde{R}_{a}^{\mu})^\mathrm{T}=((\mu_a(i,j))_{m\times m})^\mathrm{T}=(\nu_a(i,j))_{m\times m}=\widetilde{R}_{a}^{\nu}$, where $(\widetilde{R}_{a}^{\mu})^\mathrm{T}$ denotes the transpose of $\widetilde{R}_{a}^{\mu}$. Accordingly, 
$\widetilde{R}_{a}$ can be conveyed separately through $\widetilde{R}_{a}^{\mu}$ and $\widetilde{R}_{a}^{\nu}$. Therefore, we can utilize $(A,I,\{{\widetilde{R}}^{\mu}_a\mid a\in A\})$ to represent an intuitionistic fuzzy preference-based conflict situation.

\begin{proposition}\label{T1}
Let $IFPS=(A,I,\{\widetilde{R}_a\mid a\in A\})$ be an intuitionistic fuzzy preference-based conflict situation. For an agent $a\in A$, and two issues $i,j\in I$, it holds that $0\leqslant\mu_{a}(i,j)+\mu_{a}(j,i)\leqslant 1$ and $0\leqslant\nu_{a}(i,j)+\nu_{a}(j,i)\leqslant 1$.
\end{proposition}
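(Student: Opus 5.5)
The argument uses only the two defining constraints in Definition~\ref{IFPCS}. The first is that each $\widetilde{r}_a(i,j)=(\mu_a(i,j),\nu_a(i,j))$ is an intuitionistic fuzzy number, so $\mu_a(i,j),\nu_a(i,j)\in[0,1]$ and $\mu_a(i,j)+\nu_a(i,j)\leqslant 1$. The second is the reciprocity condition $\mu_a(i,j)=\nu_a(j,i)$.

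For the first inequality, fix an agent $a\in A$ and issues $i,j\in I$. Applying reciprocity to the pair $(j,i)$ gives $\mu_a(j,i)=\nu_a(i,j)$. Hence
\begin{equation*}
\mu_a(i,j)+\mu_a(j,i)=\mu_a(i,j)+\nu_a(i,j)=1-\pi_a(i,j).
\end{equation*}
The lower bound $0$ holds because both membership degrees are nonnegative. The upper bound $1$ holds because an intuitionistic fuzzy number has membership plus non-membership at most $1$, that is, $\pi_a(i,j)\geqslant 0$.

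The second inequality is symmetric. Reciprocity applied to $(i,j)$ gives $\nu_a(j,i)=\mu_a(i,j)$, so
\begin{equation*}
\nu_a(i,j)+\nu_a(j,i)=\nu_a(i,j)+\mu_a(i,j),
\end{equation*}
and the same bounds apply.

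The case $i=j$ is consistent with this: $\mu_a(i,i)=\nu_a(i,i)=0.5$ gives both sums equal to $1$, which lies within the bound.

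There is no real obstacle here. The only point needing care is applying reciprocity with the indices in the right order, so that $\mu_a(j,i)$ is identified with $\nu_a(i,j)$ rather than $\nu_a(j,i)$.
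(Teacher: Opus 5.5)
Your proposal is correct and follows the same approach as the paper: it combines the intuitionistic fuzzy number constraint $0\leqslant\mu_a(i,j)+\nu_a(i,j)\leqslant 1$ with the reciprocity condition $\nu_a(i,j)=\mu_a(j,i)$. Your version simply writes out the substitutions, and the $i=j$ check, that the paper calls obvious.
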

\begin{proof}
For  $0\leqslant\mu_{a}(i,j)+\nu_{a}(i,j)\leqslant 1$ and  $\nu_a(i,j) = \mu_a(j,i)$, it is obvious that
$ 0 \leqslant \mu_a(i,j) + \mu_a(j,i) \leqslant 1$ and $0 \leqslant \nu_a(i,j) + \nu_a(j,i) \leqslant 1$.
\end{proof}

\begin{example}\label{example1}
We illustrate the Middle East Conflict by the intuitionistic fuzzy preference-based conflict situation $(A,I, \{\widetilde{R}_{a_1}, \widetilde{R}_{a_2}, \widetilde{R}_{a_3}, \widetilde{R}_{a_4}, \widetilde{R}_{a_5},\widetilde{R}_{a_6}\})$, where
$A=\{a_1,a_2,a_3,a_4,a_5,a_6\}$ represents Israel, Egypt, Palestine, Jordan, Syria, and Saudi Arabia, respectively, $I=\{i_1,i_2,i_3,i_4,i_5\}$ represents ``Autonomous Palestinian state on the West Bank and Gaza'', ``Israeli military outpost along the Jordan River'', ``Israel retains East Jerusalem'', ``Israeli military outposts on the Golan Heights'', and ``Arab countries grant citizenship to Palestinians who choose to remain within their borders'', respectively; $\widetilde{R}_{a_1}, \widetilde{R}_{a_2}, \widetilde{R}_{a_3}, \widetilde{R}_{a_4}$, and $ \widetilde{R}_{a_5}$ stand for the intuitionistic fuzzy preferences of agents $a_1, a_2, a_3, a_4$, and $a_5$, respectively, on all pairs of issues.
For example, the intuitionistic fuzzy preference $\widetilde{R}_{a_1}$ is presented as follows.
\[
\widetilde{R}_{a_1} = \begin{bmatrix}
				(0.50,0.50) & (0.90,0.03) & (0.93,0.05) & (0.89,0.06) & (0.97,0.02) \\
				(0.03,0.90) & (0.50,0.50) & (0.95,0.04) & (0.90,0.08) & (0.93,0.05) \\
				(0.05,0.93) & (0.04,0.95) & (0.50,0.50) & (0.96,0.03) & (0.90,0.07) \\
				(0.06,0.89) & (0.08,0.90) & (0.03,0.96) & (0.50,0.50) & (0.95,0.01) \\
				(0.02,0.97) & (0.05,0.93) & (0.07,0.90) & (0.01,0.95) & (0.50,0.50)
			\end{bmatrix}.
			\]
For agent $a_1$, $\widetilde{r}_{a_1}(i_1,i_3)=(0.93, 0.05)$ indicates that the preference degree on issue pair $(i_1,i_3)$ is $0.93$, the non-preference degree on issue pair $(i_1,i_3)$ is $0.05$, and the preference hesitation degree on issue pair $(i_1,i_3)$ is $0.02$.
For convenience, the intuitionistic fuzzy preferences are abbreviated as $\widetilde{R}_{a_1}^{\mu}$, $\widetilde{R}_{a_2}^{\mu}$, $\widetilde{R}_{a_3}^{\mu}$, $\widetilde{R}_{a_4}^{\mu}$, $\widetilde{R}_{a_5}^{\mu}$, and $\widetilde{R}_{a_6}^{\mu}$, respectively, as follows.			
{\normalsize
\setlength{\arraycolsep}{1.0pt}
\[
\begin{array}{cc}
\widetilde{R}_{a_1}^{\mu} =
\begin{bmatrix}
0.50 & 0.90 & 0.93 & 0.89 & 0.97 \\
						0.03 & 0.50 & 0.95 & 0.90 & 0.93 \\
						0.05 & 0.04 & 0.50 & 0.96 & 0.90 \\
						0.06 & 0.08 & 0.03 & 0.50 & 0.95 \\
						0.02 & 0.05 & 0.07 & 0.01 & 0.50
					\end{bmatrix},
					& 
					\widetilde{R}_{a_2}^{\mu} = 
					\begin{bmatrix}
						0.50 & 0.02 & 0.04 & 0.06 & 0.03 \\
						0.97 & 0.50 & 0.05 & 0.07 & 0.04 \\
						0.95 & 0.93 & 0.50 & 0.02 & 0.05 \\
						0.90 & 0.91 & 0.96 & 0.50 & 0.03 \\
						0.92 & 0.90 & 0.89 & 0.95 & 0.50
					\end{bmatrix},
				\end{array}
				\]
				\[
				\begin{array}{cc}
					\widetilde{R}_{a_3}^{\mu} = 
					\begin{bmatrix}
						0.50 & 0.10 & 0.88 & 0.12 & 0.85 \\
						0.89 & 0.50 & 0.10 & 0.09 & 0.13 \\
						0.10 & 0.89 & 0.50 & 0.10 & 0.08 \\
						0.86 & 0.90 & 0.87 & 0.50 & 0.15 \\
						0.10 & 0.07 & 0.17 & 0.05 & 0.50
					\end{bmatrix},
				&
					\widetilde{R}_{a_4}^{\mu} = 
					\begin{bmatrix}
						0.50 & 0.85 & 0.10 & 0.80 & 0.10 \\
						0.12 & 0.50 & 0.82 & 0.10 & 0.79 \\
						0.88 & 0.17 & 0.50 & 0.81 & 0.13 \\
						0.11 & 0.86 & 0.09 & 0.50 & 0.84 \\
						0.83 & 0.19 & 0.82 & 0.10 & 0.50
					\end{bmatrix},
				\end{array}
				\]
				\[
				\begin{array}{cc}
					\widetilde{R}_{a_5}^{\mu} = 
					\begin{bmatrix}
						0.50 & 0.45 & 0.40 & 0.42 & 0.38 \\
						0.48 & 0.50 & 0.43 & 0.39 & 0.37 \\
						0.46 & 0.47 & 0.50 & 0.41 & 0.36 \\
						0.44 & 0.49 & 0.48 & 0.50 & 0.35 \\
						0.43 & 0.44 & 0.45 & 0.46 & 0.50
					\end{bmatrix},
					&
					\widetilde{R}_{a_6}^{\mu} = 
					\begin{bmatrix}
						0.50 & 0.90 & 0.05 & 0.85 & 0.06 \\
						0.07 & 0.50 & 0.04 & 0.88 & 0.03 \\
						0.90 & 0.90 & 0.50 & 0.02 & 0.01 \\
						0.11 & 0.10 & 0.90 & 0.50 & 0.09 \\
						0.90 & 0.08 & 0.93 & 0.90 & 0.50
					\end{bmatrix}.
				\end{array}
				\]
			}
\end{example}

\subsection{Intuitionistic Fuzzy Preference-Based Conflict Function}\label{IFPCF}
Subsequently, we introduce an intuitionistic fuzzy preference-based conflict function to characterize the relationship between two agents towards an issue pair.

\begin{definition}\label{Conflictfunction}
Let $IFPS=(A,I,\{	\widetilde{R}_a\mid a\in A\})$ be an intuitionistic fuzzy preference-based conflict situation. Then the conflict function $CF_{ij}: A\times A\longrightarrow [0,1]$ towards issue pair $(i,j)\in I\times I$ is defined by: for two agents $a,b\in A$, 
\begin{align*}
CF_{ij}(a,b)= \frac{1}{2} ( |\mu_a(i,j)-\mu_b(i,j)| + |\nu_a(i,j)-\nu_b(i,j)|+ |\pi_a(i,j)-\pi_b(i,j)|).
\end{align*}
\end{definition}

The conflict function given Definition~\ref{Conflictfunction} ultilizes the distance between two intuitionistic fuzzy numbers to measure the conflict degree between two agents with respect to an issue pair.

\begin{proposition}\label{pro1}
Let $IFPS=(A,I,\{\widetilde{R}_a\mid a\in A\})$ be an intuitionistic fuzzy preference-based conflict situation. Then the conflict function $CF_{ij}(\cdot,\cdot)$ satisfies the following properties: for three agents $ a,b,c\in A$, and two issues $ i,j\in I$, 
\begin{enumerate}[label={(\arabic*)}]
\item Non-negativity: $CF_{ij}(a,b)\geqslant0$, and $CF_{ij}(a,a)=0$;
\item Symmetry:  $CF_{ij}(a,b)=CF_{ij}(b,a)$;
\item Triangle inequality: $CF_{ij}(a,b)+CF_{ij}(b,c)\geqslant CF_{ij}(a,c)$.
\end{enumerate}	
\end{proposition}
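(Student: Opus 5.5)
The plan is to treat $CF_{ij}$ as one half of the $\ell_1$ distance between the triples $x_a=(\mu_a(i,j),\nu_a(i,j),\pi_a(i,j))$ and $x_b=(\mu_b(i,j),\nu_b(i,j),\pi_b(i,j))$ in $\mathbb{R}^3$. Each property then reduces to the corresponding fact about absolute values applied coordinatewise. Throughout, $i,j\in I$ are fixed and I abbreviate $\mu_a=\mu_a(i,j)$, $\nu_a=\nu_a(i,j)$, and so on; nothing about the constraints of Definition~\ref{IFPCS} (such as $\mu_a(i,j)=\nu_a(j,i)$) is needed for (1)--(3).

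\emph{Non-negativity.} Each of the three summands $|\mu_a-\mu_b|$, $|\nu_a-\nu_b|$, $|\pi_a-\pi_b|$ is non-negative, and the factor $\frac12$ is positive, so $CF_{ij}(a,b)\geqslant 0$. Taking $b=a$ makes every summand $|0|=0$, so $CF_{ij}(a,a)=0$.

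\emph{Symmetry.} This follows from $|x-y|=|y-x|$ applied to each of the three terms.

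\emph{Triangle inequality.} For each coordinate, the scalar triangle inequality gives
\[
|\mu_a-\mu_c|\leqslant|\mu_a-\mu_b|+|\mu_b-\mu_c|,
\]
and likewise for $\nu$ and $\pi$. Summing the three inequalities and multiplying by $\frac12$ yields $CF_{ij}(a,c)\leqslant CF_{ij}(a,b)+CF_{ij}(b,c)$.

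No step is a genuine obstacle; the only point needing care is keeping the issue pair $(i,j)$ fixed, so that all three agents' values are taken at the same entry of their matrices $\widetilde{R}_a,\widetilde{R}_b,\widetilde{R}_c$. The codomain claim $CF_{ij}(a,b)\leqslant 1$ is not part of the statement. If one wanted it, it follows from $\mu+\nu+\pi=1$: the positive and negative parts of the coordinate differences each sum to at most $1$.
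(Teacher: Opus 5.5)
Your proof is correct and is the same approach as the paper, which just calls (1)--(3) straightforward; the intended verification is your observation that $CF_{ij}$ is half the $\ell_1$ distance between the triples $(\mu,\nu,\pi)$, so each property comes from the corresponding property of $|\cdot|$. Your side remark on $CF_{ij}(a,b)\leqslant 1$ is also correct (the differences sum to zero because $\mu+\nu+\pi=1$, and $\pi\geqslant 0$), and it is cleaner than the case analysis the paper uses for the next proposition.
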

\begin{proof}
The proofs of $(1)$, $(2)$, and $(3)$ are straightforward.\qedhere
\end{proof}
		
\begin{proposition}
Let $IFPS=(A,I,\{\widetilde{R}_a\mid a\in A\})$ be an intuitionistic fuzzy preference-based conflict situation. Then we have $0\leqslant CF_{ij}(a,b)\leqslant 1$ for two agents $ a,b\in A$ towards issue pair $(i,j)\in I\times I$.
\end{proposition}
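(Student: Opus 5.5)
Lower bound: every term in $CF_{ij}(a,b)$ is an absolute value, so $CF_{ij}(a,b)\geqslant 0$; this is also item (1) of Proposition~\ref{pro1}. All the work is in the upper bound.

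For the upper bound, fix the pair $(i,j)$ and abbreviate $\mu_a=\mu_a(i,j)$, $\nu_a=\nu_a(i,j)$, $\pi_a=\pi_a(i,j)$, and similarly for $b$. Since $\widetilde{r}_a(i,j)$ and $\widetilde{r}_b(i,j)$ are intuitionistic fuzzy numbers, the triples $(\mu_a,\nu_a,\pi_a)$ and $(\mu_b,\nu_b,\pi_b)$ have nonnegative entries summing to $1$. Put $d_1=\mu_a-\mu_b$, $d_2=\nu_a-\nu_b$, $d_3=\pi_a-\pi_b$. Then $d_1+d_2+d_3=0$, because $\pi_a-\pi_b=-(\mu_a-\mu_b)-(\nu_a-\nu_b)$.

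The key step is a sign-splitting argument. Let $P$ be the sum of the positive $d_k$ and $N$ the sum of the absolute values of the negative $d_k$. The relation $\sum_k d_k=0$ gives $P=N$, so
\[
|d_1|+|d_2|+|d_3|=P+N=2P .
\]
Each positive $d_k$ is at most the corresponding coordinate of $a$, since the coordinate of $b$ that is subtracted is nonnegative. Hence $P$ is at most the sum of some subset of $\{\mu_a,\nu_a,\pi_a\}$, which is at most $\mu_a+\nu_a+\pi_a=1$. Therefore $|d_1|+|d_2|+|d_3|\leqslant 2$, and $CF_{ij}(a,b)\leqslant 1$.

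The only step needing care is the identity $P=N$, which relies on the cancellation $\sum_k d_k=0$. The crude bound $|d_k|\leqslant 1$ for each $k$ would only give $CF_{ij}(a,b)\leqslant 3/2$, so this cancellation is exactly what produces the factor $\tfrac12$ in the definition.

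If a more explicit argument is wanted, an alternative is a case analysis on the signs of $d_1$ and $d_2$. When $d_1$ and $d_2$ have the same sign, $|d_3|=|d_1+d_2|=|d_1|+|d_2|$, so the total is $2(|d_1|+|d_2|)$. In that case $|d_1|+|d_2|=|(\mu_a+\nu_a)-(\mu_b+\nu_b)|=|\pi_a-\pi_b|\leqslant 1$. When $d_1$ and $d_2$ have opposite signs, $|d_3|\leqslant\max(|d_1|,|d_2|)$, and each of $|d_1|,|d_2|$ is at most $1$, which again gives a total of at most $2$.

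The bound is attained: for $\widetilde{r}_a(i,j)=(1,0)$ and $\widetilde{r}_b(i,j)=(0,1)$ we get $CF_{ij}(a,b)=1$.
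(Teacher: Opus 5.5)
Your main sign-splitting argument is correct and complete, but it does not follow the paper's route. The paper works only with $\Delta\mu=\mu_a(i,j)-\mu_b(i,j)$ and $\Delta\nu=\nu_a(i,j)-\nu_b(i,j)$. It rewrites the hesitation term as $|\Delta\mu+\Delta\nu|$ and splits on the sign of $\Delta\mu\cdot\Delta\nu$:
\begin{itemize}
\item When the signs agree, $CF_{ij}(a,b)=|\Delta\mu+\Delta\nu|$, which is at most $1$ because $\mu+\nu\in[0,1]$ for both agents.
\item When they differ, $|\Delta\mu+\Delta\nu|=\bigl||\Delta\mu|-|\Delta\nu|\bigr|$, so $CF_{ij}(a,b)=\max\{|\Delta\mu|,|\Delta\nu|\}\leqslant 1$.
\end{itemize}
You instead view $(\mu,\nu,\pi)$ as a point of the probability simplex. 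You use $d_1+d_2+d_3=0$ to write the $\ell^1$ distance as twice its positive part, and you bound that part by a partial sum of $a$'s coordinates; this is the usual total-variation bound.

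Your version avoids case analysis, explains where the factor $\tfrac12$ comes from, and extends verbatim to any number of nonnegative components summing to $1$. The paper's version is tied to three components, but it gives an exact value in each case. Combining the two cases yields $CF_{ij}(a,b)=\max\{|\Delta\mu|,|\Delta\nu|,|\pi_a(i,j)-\pi_b(i,j)|\}$.

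Your secondary case analysis is essentially the paper's proof, but its opposite-sign case has a slip. From $|d_3|\leqslant\max(|d_1|,|d_2|)$ and $|d_1|,|d_2|\leqslant 1$ you only get $|d_1|+|d_2|+|d_3|\leqslant |d_1|+|d_2|+\max(|d_1|,|d_2|)\leqslant 3$. You need the equality $|d_3|=|d_1+d_2|=\bigl||d_1|-|d_2|\bigr|$, which makes the total exactly $2\max(|d_1|,|d_2|)\leqslant 2$. Your main argument does not depend on this step, so the proof stands.
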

\begin{proof}
By Proposition~\ref{pro1}, we have $CF_{ij}(a,b)\geqslant 0$ for two agents $ a,b\in A$ towards issues $ i,j\in I$. It remains to prove that $CF_{ij}(a,b)\leqslant 1$.
First, for two agents $a,b\in A$ and two issues $i,j\in I$, we have
\begin{align}
\pi_a(i,j)-\pi_b(i,j)&= (1-\mu_a(i,j)-\nu_a(i,j))-(1-\mu_b(i,j)-\nu_b(i,j)) \notag \\&= (\mu_b(i,j)-\mu_a(i,j)) + (\nu_b(i,j)-\nu_a(i,j)).
\end{align}
After that, suppose $\Delta\mu = \mu_a(i,j)-\mu_b(i,j)$ and $
\Delta\nu = \nu_a(i,j)-\nu_b(i,j)$, we obtain
\[
CF_{ij}(a,b)=\frac{1}{2}\bigl(|\Delta\mu| + |\Delta\nu| + |\Delta\mu+\Delta\nu|\bigr).
\]
Since $0\leqslant\mu_{a}(i,j)+\nu_{a}(i,j)\leqslant 1$, and $0\leqslant\mu_{b}(i,j)+\nu_{b}(i,j)\leqslant 1$, then we get $|\Delta\mu+\Delta\nu|\leq1,\ |\Delta\mu|\leq1$, and $ |\Delta\nu|\leq1$. Third,
if $\Delta\mu\cdot\Delta\nu\geqslant0$,
then $|\Delta\mu|+|\Delta\nu|=|\Delta\mu+\Delta\nu|$. Hence,
\[			
CF_{ij}(a,b)= \frac{1}{2}\bigl(2*|\Delta\mu+\Delta\nu|\bigr)\leq 1.
\]
If $\Delta\mu\cdot\Delta\nu<0$,
then $|\Delta\mu+\Delta\nu|=|\;|\Delta\mu|-|\Delta\nu||$. Hence,
\[CF_{ij}(a,b)= \frac{1}{2}\bigl(|\Delta\mu| + |\Delta\nu|+|\;|\Delta\mu|-|\Delta\nu||\bigr)\leq \frac{1}{2}\cdot max\{2|\Delta\mu|, 2|\Delta\nu|\}\leq 1.\]
Therefore, it concludes that $0\leqslant CF_{ij}(a,b)\leqslant 1$.\qedhere
\end{proof}

Building on the conflict function defined towards issue pairs, we construct an agent-level conflict function with respect to multiple issues for three-way conflict analysis.
		
\begin{definition}\label{D16}
Let $IFPS=(A,I,\{\widetilde{R}_a\mid a\in A\})$ be an intuitionistic fuzzy preference-based conflict situation. Then the conflict function $CF_{J}: A\times A\longrightarrow [0,1]$ towards an issue bundle $J\subseteq I$ $(|J|\geqslant2)$ is defined by: for two agents $a,b\in A$,
\begin{align}
CF_{J}(a,b) = \frac{1}{|J|(|J|-1)} \sum_{\substack{(i,j) \in J\wedge i\neq j}} CF_{ij}(a,b).
\end{align}
\end{definition}

The value $CF_{J}(a,b)$ quantifies the conflict degree between two agents $a$ and $b$ with respect to a bundle of issues $J\subseteq I$. That is, the conflict function $CF_{J}(\cdot,\cdot)$ provides a formal measure of inter-agent conflict over the specified issue bundle.

\begin{proposition}\label{Conflictfunctiontowardsmultipleissues}
Let $IFPS=(A,I,\{\widetilde{R}_a\mid a\in A\})$ be an intuitionistic fuzzy preference-based conflict situation. Then the conflict function $CF_J$ towards a bundle of issues $J\subseteq I$ satisfies the following properties: for three agents $ a,b,c\in A$, 
\begin{enumerate}[label={(\arabic*)}]
\item Non-negativity: $0\leqslant CF_{J}(a,b)\leqslant1$, and $CF_{J}(a,a)=0$;
\item Symmetry:  $CF_{J}(a,b)=CF_{J}(b,a)$;
\item Triangle inequality: $CF_{J}(a,b)+CF_{J}(b,c)\geqslant CF_{J}(a,c)$.
\end{enumerate}	
\end{proposition}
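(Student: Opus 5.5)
The plan is to derive each property of $CF_J$ from the corresponding pairwise property of $CF_{ij}$, using that $CF_J$ is a nonnegative average of the $CF_{ij}$ with the fixed weight $\frac{1}{|J|(|J|-1)}$. Since $|J|\geqslant 2$, the number of ordered pairs $(i,j)\in J\times J$ with $i\neq j$ is exactly $|J|(|J|-1)$. So $CF_J(a,b)$ is the arithmetic mean of the values $CF_{ij}(a,b)$ over these pairs. Every property below is either linear or preserved under taking such averages.

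For (1), I will invoke Proposition~\ref{pro1}(1) together with the subsequent proposition, which give $0\leqslant CF_{ij}(a,b)\leqslant 1$ for each pair. Summing over the $|J|(|J|-1)$ ordered pairs bounds the sum between $0$ and $|J|(|J|-1)$. Dividing by $|J|(|J|-1)$ yields $0\leqslant CF_J(a,b)\leqslant 1$. Moreover, $CF_{ij}(a,a)=0$ for every pair, since all three absolute differences in Definition~\ref{Conflictfunction} vanish, so $CF_J(a,a)=0$.

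For (2), symmetry holds termwise by Proposition~\ref{pro1}(2): $CF_{ij}(a,b)=CF_{ij}(b,a)$ for each $(i,j)$. Summing and normalizing gives $CF_J(a,b)=CF_J(b,a)$.

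For (3), I will write $CF_J(a,b)+CF_J(b,c)$ as a single normalized sum $\frac{1}{|J|(|J|-1)}\sum_{i\neq j}\bigl(CF_{ij}(a,b)+CF_{ij}(b,c)\bigr)$, using linearity of the finite sum. I then apply Proposition~\ref{pro1}(3) inside each term, which bounds each summand below by $CF_{ij}(a,c)$. Monotonicity of summation and the positive normalizing constant then give $CF_J(a,b)+CF_J(b,c)\geqslant CF_J(a,c)$.

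There is no genuine obstacle, since everything reduces to the pairwise results. The only points needing care are confirming that the index set in Definition~\ref{D16} consists of ordered pairs of distinct issues in $J$, so that the normalizing constant matches the count, and noting that $|J|\geqslant2$ keeps the denominator positive. If one wanted Proposition~\ref{pro1}(3) proved explicitly, it would follow from the ordinary triangle inequality $|x-z|\leqslant|x-y|+|y-z|$ applied to each of the three components $\mu$, $\nu$, $\pi$ and then summed.
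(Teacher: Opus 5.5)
Your proposal is correct and follows the route the paper intends. The paper only says the three properties are straightforward, and your argument supplies the omitted details: you reduce each property termwise to the pairwise facts for $CF_{ij}$ (the bound $0\leqslant CF_{ij}(a,b)\leqslant 1$, $CF_{ij}(a,a)=0$, and the symmetry and triangle inequality of Proposition~\ref{pro1}), then average over the $|J|(|J|-1)$ ordered pairs of distinct issues. Your remark that Proposition~\ref{pro1}(3) follows from the componentwise triangle inequality for $\mu$, $\nu$, $\pi$ also correctly fills in what the paper leaves unproved there.
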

\begin{proof}
The proofs of $(1)$, $(2)$, and $(3)$ are straightforward.\qedhere
\end{proof}

\begin{table}[htp]
\centering
\begin{minipage}{0.48\textwidth}
\centering
\renewcommand{\arraystretch}{0.9}
\caption{The conflict degree between two agents $a_2$ and $a_4$.}
\label{table2}
\setlength{\tabcolsep}{6pt}
\begin{tabular}{llllll}
\toprule
\textbf{$I\backslash I$} & $i_1$ & $i_2$ & $i_3$ & $i_4$ & $i_5$ \\
\midrule
$i_1$ & 0.00 & 0.85 & 0.07 & 0.79 & 0.09 \\
$i_2$ &      & 0.00 & 0.77 & 0.05 & 0.75 \\
$i_3$ &      &      & 0.00 & 0.87 & 0.08 \\
$i_4$ &      &      &      & 0.00 & 0.85 \\
$i_5$ &      &      &      &      & 0.00 \\
\bottomrule
\end{tabular}
\end{minipage}
\hfill
\begin{minipage}{0.48\textwidth}
\centering
\renewcommand{\arraystretch}{0.9}
\caption{The conflict degree between two agents $a_3$ and $a_4$.}
\label{table3} 
\setlength{\tabcolsep}{6pt}
\begin{tabular}{llllll}
\toprule
\textbf{$I\backslash I$} & $i_1$ & $i_2$ & $i_3$ & $i_4$ & $i_5$ \\
\midrule
$i_1$ & 0.00 & 0.77 & 0.78 & 0.75 & 0.75 \\
$i_2$ &      & 0.00 & 0.72 & 0.04 & 0.78 \\
$i_3$ &      &      & 0.00 & 0.78 & 0.70 \\
$i_4$ &      &      &      & 0.00 & 0.74 \\
$i_5$ &      &      &      &      & 0.00 \\
\bottomrule
\end{tabular}
\end{minipage}
\end{table}
		
\begin{example}\label{example3}
(Continued from Example  \ref{example1}). According to Definition~\ref{Conflictfunction}, we have
\begin{align}
CF_{i_2i_1}(a_3,a_4)=&\frac{1}{2}\times(|0.89-0.12|+|0.10-0.85|+|0.01-0.03|)=0.77,\\
CF_{i_2i_1}(a_2,a_4)=&\frac{1}{2}\times(|0.97-0.12|+|0.02-0.85|+|0.01-0.03|)=0.85.
\end{align}
Similarly, we compute the conflict degree between two agents towards all issue pairs and show these results by Tables~\ref{table2} and~\ref{table3}. Subsequently,  
by Definition~\ref{D16}, we have
\begin{align}
CF_I(a_4,a_5)=&\frac{2}{20}\times(0.40+0.39+0.38+0.38+0.39+0.37+0.42+0.40\notag \\&+0.37+0.49)\notag \\=& 0.40.
\end{align}
Similarly, we 
calculate the conflict degree between the other two agents towards the issue set $I$ and show these results by Table~\ref{table_dI}.
\end{example}

\begin{table}[htp]
\centering
\caption{The conflict degree between two agents with respect to $I$.}
\label{table_dI}
\setlength{\tabcolsep}{10pt} 
\renewcommand{\arraystretch}{1.2}
\begin{tabularx}{\textwidth}{c *{6}{>{\centering\arraybackslash}X}}
\toprule
\textbf{$A\backslash A$} & $a_1$ & $a_2$ & $a_3$ & $a_4$ & $a_5$ & $a_6$ \\
\midrule
$a_1$ & 0.00 & 0.90 & 0.68 & 0.40 & 0.53 & 0.64 \\
$a_2$ & & 0.00 & 0.44 & 0.52 & 0.47 & 0.36 \\
$a_3$ &  &  & 0.00 & 0.68 & 0.48 & 0.59 \\
$a_4$ &  &  &  & 0.00 & 0.40 & 0.44 \\
$a_5$ &  &  &  &  & 0.00 & 0.48 \\
$a_6$ &  &  &  &  &  & 0.00 \\
\bottomrule
\end{tabularx}
\end{table}
			
The intuitionistic fuzzy preference provides a more nuanced characterization of agents' attitudes towards issue pairs by simultaneously capturing two complementary perspectives, and the intuitionistic fuzzy preference-based conflict function offers a more accurate and expressive representation of the relationship between two agents.

\subsection{Intuitionistic Fuzzy Preference-Based Trisections of Agent Pairs, the Agent Set and the Issue Set}\label{Conflictfunction-basedTrisection}

This section presents intuitionistic fuzzy preference-based trisections of agent pairs, the agent set, and the issue set.

\begin{definition}\label{SC}
Let $IFPS=(A,I,\{\widetilde{R}_a \mid a \in A\})$ be an intuitionistic fuzzy preference-based conflict situation, and 
a pair of thresholds $(\alpha_*,\alpha^*)$ such that $0\leqslant\alpha_*<\alpha^*\leqslant1$. Then the conflict relation $\mathbb{IR}_{J}^{\asymp}$, neutrality relation $\mathbb{IR}_{J}^{\approx}$, and alliance relation $\mathbb{IR}_{J}^{=}$ with respect to a bundle of issues $J\subseteq I$ are defined by:
\begin{enumerate}[label={(\arabic*)}]
\item $\mathbb{IR}_{J}^{\asymp}=\{(a,b)\in A\times A\mid CF_{J}(a,b)\geqslant \alpha^*\}$;
\item $\mathbb{IR}_{J}^{\approx}=\{(a,b)\in A\times A\mid \alpha_*<CF_{J}(a,b)<\alpha^*\}$;
\item$\mathbb{IR}_{J}^{=}=\{(a,b)\in A\times A\mid CF_{J}(a,b)\leqslant \alpha_*\}$.
\end{enumerate}	
\end{definition}

Based on the conflict degree between two agents, the set of agent pairs is partitioned into three categories with respect to multiple issues: the conflict relation, the neutrality relation, and the alliance relation.

\begin{definition}\label{Trisectionofagentsbasedonrelation}
Let $IFPS=(A,I,\{\widetilde{R}_a \mid a \in A\})$ be an intuitionistic fuzzy preference-based conflict situation, and 
a pair of thresholds $(\alpha_*,\alpha^*)$ such that $0\leqslant\alpha_*<\alpha^*\leqslant1$. Then the conflict coalition $\mathbb{IR}_{J}^{\asymp}(a)$, neutrality coalition $\mathbb{IR}_{J}^{\approx}(a)$, and alliance coalition $\mathbb{IR}_{J}^{=}(a)$ of agent $a\in A$ with respect to a bundle of issues $J\subseteq I$ are defined by:
\begin{enumerate}[label={(\arabic*)}]
	\item $\mathbb{IR}_{J}^{\asymp}(a)=\{b\in A\mid CF_{J}(a,b)\geqslant \alpha^*\}$;
	\item $\mathbb{IR}_{J}^{\approx}(a)=\{b\in A\mid \alpha_*<CF_{J}(a,b)<\alpha^*\}$;
	\item$\mathbb{IR}_{J}^{=}(a)=\{b\in A\mid CF_{J}(a,b)\leqslant \alpha_*\}$.
\end{enumerate}	
\end{definition}

Based on the conflict degree between two agents, the agent set is categorized into three disjoint parts: conflict coalition, neutrality coalition, and alliance coalition of an agent towards multiple issues.

\begin{proposition}
Let $IFPS=(A,I,\{\widetilde{R}_a \mid a \in A\})$ be an intuitionistic fuzzy preference-based conflict situation. 
Then we have the following results: for two agents $a,b\in A$ and a bundle of issues $J\subseteq I$,
\begin{enumerate}[label={(\arabic*)}]
\item $b\in \mathbb{IR}_{J}^{\asymp}(a)\iff a\in \mathbb{IR}_{J}^{\asymp}(b)$;
\item $b\in \mathbb{IR}_{J}^{\approx}(a)\iff a\in \mathbb{IR}_{J}^{\approx}(b)$;
\item$b\in \mathbb{IR}_{J}^{=}(a)\iff a\in \mathbb{IR}_{J}^{=}(b)$.
\end{enumerate}		
\end{proposition}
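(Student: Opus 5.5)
The plan is to reduce all three equivalences to the symmetry of the conflict function $CF_J$ established in Proposition~\ref{Conflictfunctiontowardsmultipleissues}(2). Each coalition in Definition~\ref{Trisectionofagentsbasedonrelation} is specified by a condition on the single number $CF_J(a,b)$ compared against the fixed thresholds $(\alpha_*,\alpha^*)$. So if $CF_J(a,b)=CF_J(b,a)$, the membership conditions for $b$ in the coalition of $a$ and for $a$ in the coalition of $b$ are literally the same inequality.

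First, I would recall why symmetry holds, in case a self-contained argument is preferred. By Definition~\ref{Conflictfunction}, $CF_{ij}(a,b)$ is half the sum of the absolute differences $|\mu_a(i,j)-\mu_b(i,j)|$, $|\nu_a(i,j)-\nu_b(i,j)|$ and $|\pi_a(i,j)-\pi_b(i,j)|$, each invariant under swapping $a$ and $b$. Hence $CF_{ij}(a,b)=CF_{ij}(b,a)$ for every issue pair. Averaging over the pairs $(i,j)$ with $i\neq j$ in $J$ as in Definition~\ref{D16} preserves this, so $CF_J(a,b)=CF_J(b,a)$.

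Second, I would treat the three items in turn. For (1), $b\in\mathbb{IR}_{J}^{\asymp}(a)$ iff $CF_J(a,b)\geqslant\alpha^*$. By symmetry this holds iff $CF_J(b,a)\geqslant\alpha^*$, i.e. iff $a\in\mathbb{IR}_{J}^{\asymp}(b)$. Items (2) and (3) follow in exactly the same way, using the conditions $\alpha_*<CF_J(\cdot,\cdot)<\alpha^*$ and $CF_J(\cdot,\cdot)\leqslant\alpha_*$ respectively.

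There is no real obstacle. The only point needing care is that the thresholds $(\alpha_*,\alpha^*)$ are the same for both coalitions, which the definition guarantees since they are fixed for the whole situation. One should also note that the standing assumption $|J|\geqslant2$ from Definition~\ref{D16} is needed for $CF_J$ to be defined.
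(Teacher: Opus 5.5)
Your proof is correct and is the natural argument: the paper states this proposition without writing out a proof, and the intended justification is exactly the symmetry $CF_J(a,b)=CF_J(b,a)$ from Proposition~\ref{Conflictfunctiontowardsmultipleissues}(2). Your additional checks are also appropriate, namely that the thresholds $(\alpha_*,\alpha^*)$ are common to all agents and that $|J|\geqslant 2$ is needed for $CF_J$ to be defined.
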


\begin{example}
(Continued from Example  \ref{example3}). By taking the thresholds $\alpha^*=0.6$ and $\alpha_*=0.4$, according to Definition~\ref{SC}, we have the following results:
\begin{align}
\mathbb{IR}_{I}^{\asymp}(a_1) &= \{b\in A \mid CF_{I}(a_1,b)\geqslant 0.6\} = \{a_2,a_3,a_6\}, \\
\mathbb{IR}_{I}^{\approx}(a_1) &= \{b\in A \mid 0.4 < CF_{I}(a_1,b) < 0.6\} = \{a_5\},\\
\mathbb{IR}_{I}^{=}(a_1) &= \{b\in A \mid CF_{I}(a_1,b)\leqslant 0.4\} = \{a_1,a_4\}.
\end{align}
Similarly, we calculate the conflict, neutrality, and alliance coalitions of $a_2, a_3,a_3,a_4$, and $a_5$ towards the whole set $I$ and show these results in Table~\ref{tab:trisection 1}.
\end{example}	

\begin{table}[htp]
\centering
\caption{The conflict, neutrality, and alliance coalitions towards  $I$ ($\alpha^*=0.6$, $\alpha_*=0.4$).}
\label{tab:trisection 1}
\setlength{\tabcolsep}{30pt} 
\renewcommand{\arraystretch}{1.2}
\begin{tabularx}{\textwidth}{llll}
\toprule
$A$ & $\mathbb{IR}_{I}^{\asymp}(a)$ & $\mathbb{IR}_{I}^{\approx}(a)$ & $\mathbb{IR}_{I}^{=}(a)$ \\
\midrule
$a_1$ & $\{a_2,a_3,a_6\}$ & $\{a_5\} $ & $\{a_1,a_4\} $ \\
$a_2$ & $\{a_1\}$ & $\{a_3,a_4,a_5\}$ & $\{a_2, a_6\}$ \\
$a_3$ & $\{a_1,a_4\}$ & $\{a_2,a_5,a_6\} $ & $\{a_3\}$ \\
$a_4$ & $\{a_3\}$ & $\{a_2,a_6\}$ & $\{a_1,a_4, a_5\}$ \\
$a_5$ & $\emptyset$ & $\{a_1,a_2,a_3,a_6\}$ & $\{ a_4,a_5\}$ \\
$a_6$ & $\{a_1\}$ & $\{ a_3,a_4,a_5\}$ & $\{a_2,a_6\}$ \\
\bottomrule
\end{tabularx}
\end{table}
\begin{definition}\label{D19}
Let $IFPS=(A,I,\{\widetilde{R}_a \mid a \in A\})$ be an intuitionistic fuzzy preference-based conflict situation. 
Then the conflict measure $CM: A\times 2^I\longrightarrow [0,1]$ with respect to a bundle of issues $J\subseteq I$ is defined by: for an agent $a\in A$,
\begin{align}
CM(a,J) =
\begin{cases} 
\frac{1}{|A|-1}\sum\limits_{b\in A} CF_J(a,b), &  |A| > 1, \\
0, & |A| = 1.
\end{cases}
\end{align}
\end{definition}

The conflict measure quantifies the conflict degree between a given agent and all other agents with respect to multiple issues. In particular, when \(J = I\), the conflict measure \(CM(a, I)\) represents the conflict degree between agent \(a\) and all other agents towards the whole issues.

\begin{definition}\label{D20}
Let $IFPS=(A,I,\{\widetilde{R}_a \mid a \in A\})$ be an intuitionistic fuzzy preference-based conflict situation, and a pair of thresholds $(\beta_*,\beta^*)$ such that $0\leqslant\beta_*<\beta^*\leqslant1$. Then the strong conflict agents $\mathbb{SA}(J)$, the weak conflict agents $\mathbb{WA}(J)$, and the non-conflict agents $\mathbb{NA}(J)$ towards a bundle of issues $J\subseteq I$ are defined by:	
\begin{enumerate}[label={(\arabic*)}]
\item $\mathbb{SA}(J)=\{a\in A\mid\ CM(a,J)\geqslant \beta^*\}$;
\item $\mathbb{WA}(J)=\{a\in A\mid\ \beta_*<CM(a,J)<\beta^*\}$;
\item$\mathbb{NA}(J)=\{a\in A\mid\ CM(a,J)\leqslant \beta_*\}$.
\end{enumerate}	
\end{definition}

The trisection $\langle\langle \mathbb{SA}(J),\mathbb{WA}(J), \mathbb{NA}(J)\rangle\rangle$ partitions the agent set into three disjoint subsets corresponding to high, medium, and low conflict degrees towards a bundle of issues $J\subseteq I$.

\begin{definition}\label{D17}
Let $IFPS=(A,I,\{\widetilde{R}_a \mid a \in A\})$ be an intuitionistic fuzzy preference-based conflict situation, the conflict measure $CM: 2^A\times I\longrightarrow [0,1]$ is defined by: for multiple agents $B\subseteq A$ and an issue $i\in I$,
\begin{align}
CM(B,i)=	\frac{1}{(|I|-1)\cdot |B|\cdot(|B|-1)} \sum_{\substack{j \in I\setminus \{i\}}} \sum_{\substack{a,b \in B}} CF_{ij}(a,b). 
\end{align}
\end{definition}
		
The conflict measure characterizes the conflict degree among multiple agents with respect to a specific issue. Specifically, it captures the conflict degree on that issue relative to all other issues.

\begin{definition}\label{D18}
Let $IFPS=(A,I,\{\widetilde{R}_a \mid a \in A\})$ be an intuitionistic fuzzy preference-based conflict situation, and a pair of thresholds $(\gamma_*,\gamma^*)$ such that $0\leqslant\gamma_*<\gamma^*\leqslant1$.Then the strong conflict issues $\mathbb{SI}(B)$, weak conflict issues $\mathbb{WI}(B)$, and non-conflict issues $\mathbb{NI}(B)$ for multiple agents $B\subseteq A$ are defined by:	
\begin{enumerate}[label={(\arabic*)}]
\item $\mathbb{SI}(B)=\{i\in I\mid\ CM(B,i)\geqslant \gamma^*\}$;
\item $\mathbb{WI}(B)=\{i\in I\mid\ \gamma_*<CM(B,i)<\gamma^*\}$;
\item $\mathbb{NI}(B)=\{i\in I\mid\ CM(B,i)\leqslant\gamma_*\}$.
\end{enumerate}	
\end{definition}

The trisection $\langle\langle \mathbb{SI}(B),\mathbb{WI}(B), \mathbb{NI}(B) \rangle\rangle$ partitions the issue set into three mutually exclusive subsets corresponding to high, moderate, and low levels of conflicts towards multiple agents $B\subseteq A$, respectively.

\begin{example}\label{example6}
(Continued from Example \ref{example3}). We illustrate that how to calculate the trisection of the agent set towards the whole issues and the trisection of the issue set towards the whole agents as follows.

$(1)$ According to Definition \ref{D19}, the conflict degree of six agents with respect to the issue set $I$ are computed as follows:
\begin{align}
&CM(a_1,I)=0.63,\ CM(a_2,I)= 0.54,\ CM(a_3,I)=0.57, \notag \\
&CM(a_4,I)=0.49,\ CM(a_5,I)= 0.47,\ CM(a_6,I)=0.50.
\end{align}
By taking the thresholds $\beta^*=0.6$ and $\beta_*=0.5$, the trisection of the agent set $A$ towards the whole set $I$ is derived as follows: \begin{align}
\mathbb{SA}(I)=\{a_1\},\ \mathbb{WA}(I)=\{a_2,a_3\},\ \mathbb{NA}(I)=\{a_4, a_5, a_6\}.
\end{align}
			
$(2)$ According to Definition \ref{D17}, the conflict degrees among all agents towards five issues are calculated as follows:
\begin{align} 
&CM(A,i_1)= 0.50,\ CM(A,i_2)= 0.53,\ CM(A,i_3) = 0.52, \notag \\
&CM(A,i_4)= 0.53,\ CM(A,i_5)= 0.59.
\end{align}
By taking the thresholds $\gamma^*=0.55$ and $\gamma_*=0.52$, the trisection of the issue set $I$ towards the whole set $A$ is obtained as follows: 
\begin{align}
\mathbb{SI}(A)=\{i_5\},\ \mathbb{WI}(A)=\{i_2,i_4\},\  \mathbb{NI}(A)=\{i_1,i_3\}.
\end{align}
\end{example}

\subsection{Calculation of Thresholds with Bayesian Minimum Risk Theory}\label{Thedeterminationof thresholds}

For two agents \(a, b \in A\), the symbols \(T_S\), \(T_W\), and \(T_N\) denote the actions that assign agent \(b\) to the conflict agents \(\mathbb{IR}^{\asymp}_{J}(a)\), the neutrality agents \(\mathbb{IR}^{\approx}_{J}(a)\), and the alliance agents \(\mathbb{IR}^{=}_{J}(a)\) of agent $a$, respectively. The relative loss functions associated with these three actions under different situations are summarized in Table~\ref{table:1}. Specifically, when \(b \in \mathbb{IR}^{\asymp}_{J}(a)\), the losses incurred by actions \(T_S\), \(T_W\), and \(T_N\) are denoted by \(\lambda_{SS}\), \(\lambda_{WS}\), and \(\lambda_{NS}\), respectively. Likewise, when \(b \in \mathbb{IR}^{=}_{J}(a)\), the corresponding losses of these actions are given by \(\lambda_{SN}\), \(\lambda_{WN}\), and \(\lambda_{NN}\), respectively.
In fact, an increase of the conflict degree between two agents corresponds to a decrease of the loss incurred by assigning them to the strong conflict agents. 
For simplicity, the relative loss functions based on the conflict function are given by:
\begin{align}\label{relative loss}
\lambda_{SN}^*&=\frac{1}{|A|(|A|-1)}\sum_{\substack{a,b \in A }}(1-CF_J(a,b)),\\	\lambda_{NS}^*&=\frac{1}{|A|(|A|-1)}\sum_{\substack{a,b \in A }}CF_J(a,b).
\end{align}
		 
For an agent $a\in A$ and a parameter $\sigma\in(0,1)$, the relative loss functions are defined as follows:
\begin{enumerate}[label={(\arabic*)}]
\item For $b \in \mathbb{IR}^{\asymp}_{J}(a)$, $\lambda_{SS}=0$, $\lambda_{WS}=\sigma\lambda^*_{NS}$, and $\lambda_{NS}=\lambda^*_{NS}$;
\item For $b \in \mathbb{IR}^{=}(a)$, $\lambda_{NN}=0$, $\lambda_{WN}=\sigma\lambda^*_{SN}$, and $\lambda_{SN}=\lambda^*_{SN}$.
\end{enumerate}	
\begin{table}
\centering
\caption{The relative loss functions for three actions \(T_S\), \(T_W\), and \(T_N\).}
\label{table:1}
\setlength{\tabcolsep}{40pt} 
\renewcommand{\arraystretch}{1.2}
\begin{tabularx}{\textwidth}{lll}
\toprule
$\mathrm{Action}$ & $b \in \mathbb{IR}^{\asymp}_{J}(a)$ & $ b \in \mathbb{IR}^{=}_{J}(a) $ \\
\midrule
$T_S$ & $\lambda_{SS}=0$ & $\lambda_{SN}=\lambda^*_{SN}$ \\
$T_W$ & $\lambda_{WS}=\sigma\lambda^*_{NS}$ & $\lambda_{WN}=\sigma\lambda^*_{SN}$  \\
$T_N$ & $\lambda_{NS}=\lambda^*_{NS}$ & $\lambda_{NN}=0$  \\
\bottomrule
\end{tabularx}
\end{table}		
Then, the expected losses $R^a(T_S|b), R^a(T_W|b), R^a(T_N|b)$ associated with taking the individual action for agent $b$ can be expressed as:
\begin{align}
R^a(T_S|b)&=\lambda_{SN}(1-CF_J(a,b));\\
R^a(T_N|b)&=\lambda_{NS}CF_J(a,b);\\
R^a(T_W|b)&=\sigma\lambda_{NS}CF_J(a,b)+\sigma\lambda_{SN}(1-CF_J(a,b)).
\end{align}
The Bayesian decision-theoretic rough sets yields the following minimum expected-cost decision rules as follows:\\
($r_S$): If $R^a(T_S|b)\leqslant R^a(T_W|b)$ and $R^a(T_S|b)\leqslant R^a(T_N|b))$, then $b\in \mathbb{IR}^{\asymp}_{J}(a)$;\\
($r_W$): If $R^a(T_W|b)\leqslant R^a(T_S|b)$ and $R^a(T_W|b)\leqslant R^a(T_N|b))$, then $b\in \mathbb{IR}^{\approx}_{J}(a)$;\\
($r_N$): If $R^a(T_N|b))\leqslant R^a(T_S|b)$ and $R^a(T_N|b))\leqslant R^a(T_W|b)$, then $b\in \mathbb{IR}^{=}_{J}(a)$.

Assume that $\lambda_{SS}\leq\lambda_{WS}\leq\lambda_{NS}$, $\lambda_{NN}\leq\lambda_{WN}\leq\lambda_{SN}$, the rules ($r_S$), ($r_W$), and ($r_N$) can be simply expressed by:\\
($r_S$): $\mathbb{IR}^{\asymp}_{J}(a)=\{b\in A\mid CF_{J}(a,b)\geqslant \alpha^*\}$;\\
($r_W$): $\mathbb{IR}^{\approx}_{J}(a)=\{b\in A\mid \alpha_*<CF_{J}(a,b)<\alpha^*\}$;\\
($r_N$): $\mathbb{IR}^{=}_{J}(a)=\{b\in A\mid CF_{J}(a,b)\leqslant \alpha_*\}$, where	
\begin{align}\label{threshold}
\alpha^*&=\frac{\lambda_{SN}-\sigma\lambda_{SN}}{\lambda_{SN}-\sigma\lambda_{SN}+\sigma\lambda_{NS}},\ \alpha_*=\frac{\sigma\lambda_{SN}}{\sigma\lambda_{SN}+\lambda_{NS}-\sigma\lambda_{NS}}.
\end{align}
It is evident that the thresholds \(\alpha_*\) and \(\alpha^*\) satisfy \(0 \leqslant \alpha_* < \alpha^* \leqslant 1\), with the parameter \(\sigma\) constrained to the interval \(0 < \sigma < 0.5\).

\begin{example}\label{example4}
(Continued from Example \ref{example3}). We illustrate that how to calculate the trisections of the set of agent pairs, the agent set and the issue set with the relative loss functions as follows.
First, for the bundle of issues $J=I$, the relative losses are obtained according to Equation~(\ref{relative loss}) as follows:
\begin{align}
\lambda_{SN}&=\frac{1}{|A|(|A|-1)}\sum_{\substack{a,b \in A }}(1-CF_I(a,b))=0.47,\\
\lambda_{NS}&=\frac{1}{|A|(|A|-1)}\sum_{\substack{a,b \in A }}CF_I(a,b)=0.53.
\end{align}
Second, by taking the parameter $\sigma= 0.44$, according to Equation (\ref{threshold}), we have $\alpha^*=0.53$ and $\alpha_*=0.41$, and show the trisection of the set of agent pairs $A \times A$ towards a single agent by Table~\ref{tab:trisection}. Third, by taking $\beta^*=0.53$ and $\beta_*=0.41$, we have the trisection of the set of agents $A$ towards the whole set $I$ as follows: \begin{align}
\mathbb{SA}(I)=\{a_1,a_2,a_3\},\ \mathbb{WA}(I)=\{a_4, a_5, a_6\},\ \mathbb{NA}(I)=\emptyset.
\end{align}
Finally, by taking $\gamma^*=0.53$ and $\gamma_*=0.41$, we calculate the trisection of the set of issues $I$ towards the whole set $A$ as follows: 
\begin{align}
\mathbb{SI}(A)=\{i_2,i_4,i_5\},\ \mathbb{WI}(A)=\{i_1,i_3\},\ \mathbb{NI}(A)=\emptyset.
\end{align}
\end{example}

\begin{table}[htp]
\centering
\caption{The trisection of agents for an individual agent over $I$ ($\alpha^*=0.53$, $\alpha_*=0.41$).}
\label{tab:trisection}
\setlength{\tabcolsep}{30pt} 
\renewcommand{\arraystretch}{1.2}
\begin{tabularx}{\textwidth}{llll}
\toprule
$A$ & $\mathbb{IR}^{\asymp}_{J}(a)$ & $\mathbb{IR}^{\approx}_{J}(a)$ & $\mathbb{IR}^{=}_{J}(a)$ \\
\midrule
$a_1$ & $\{a_2, a_3,a_5, a_6\}$ & $\emptyset $ & $\{a_1, a_4\} $ \\
$a_2$ & $\{a_1\}$ & $\{a_3,a_4, a_5\}$ & $\{a_2,a_6\} $ \\
$a_3$ & $\{a_1, a_4,a_6\} $ & $\{a_2,a_5\} $ & $\{a_3\} $ \\
$a_4$ & $\{a_3\}$ & $\{a_2,a_6\}$ & $\{a_1,a_4,a_5\}$ \\
$a_5$ & $\{a_1\}$ & $\{ a_2,a_3,a_6\}$ & $\{a_4,a_5\}$ \\
$a_6$ & $\{a_1,a_3\}$ & $\{ a_4,a_5\}$ & $\{a_2, a_6\}$ \\
\bottomrule
\end{tabularx}
\end{table}

\section{Construction of Feasible Strategies via an Adjustment Mechanism}\label{sec:conflictresolution}
		
In this section, we present feasible strategies based on adjustment mechanisms within intuitionistic fuzzy preference-based conflict situations.

\begin{definition}\label{D}
Let $IFPS=(A,I,\{\widetilde{R}_a \mid a \in A\})$ be an intuitionistic fuzzy preference-based conflict situation, the conflict measure $CM: 2^A\times 2^I\longrightarrow [0,1]$ for multiple agents $B\subseteq A$ towards a bundle of issues $J\subseteq I$ is defined by:
\begin{align}
CM(B,J) = 
\frac{1}{|B|}\sum_{\substack{a\in B}}CM(a,J). 
\end{align}
\end{definition}	

The conflict measure given by Definition~\ref{D} quantifies the conflict degree among multiple agents towards an issue bundle within intuitionistic fuzzy preference-based conflict situations. Moreover, a higher conflict degree indicates a strong disagreement among multiple agents regarding the issue bundle, which complicates the achievement of consensus and necessitates the implementation of feasible strategies.

Follows, we demonstrate that how to design feasible strategies by adjusting agents' preferences so that the conflict degree satisfies the condition such as \( CM(A,I) \leq \kappa \), where \(\kappa \) is a critical threshold. 
In practice, the conflict degree of intuitionistic fuzzy preference-based conflict situations can be reduced by appropriately modifying agents' preferences. Since the conflict degree \( CM(a,I) \) generally varies across agents, a higher value indicates a greater potential for adjustment. The conflict resolution process prioritizes the adjustment of the intuitionistic fuzzy preference of an agent associated with the highest conflict degree.

The optimization function of constructing feasible strategies aims to simultaneously minimize the conflict degree among multiple agents and the magnitude of preference adjustments with a constraint on the number of adjusted intuitionistic fuzzy preferences. That is, it seeks to reduce the conflict degree of the conflict situation while preserving the stability of agents' preferences. Accordingly, the precedence is given to the intuitionistic fuzzy preferences \( \widetilde{R}_{a^*} \), where
$a^* = \arg \max \{ CM(a,I) \mid a \in A \}.$

\begin{definition}\label{adjustment}
Let $IFPS=(A,I,\{\widetilde{R}_a \mid a \in A\})$ be an intuitionistic fuzzy preference-based conflict situation. 
Then the optimization function of constructing feasible strategies for the target agent $a^*$ is defined as:
\begin{align*}
& \operatorname*{min}L= \overline{CM}(a^*,I)+ \rho
\sum_{i\neq j}\Big[|\overline{\mu}_{a^*}(i,j)-\mu_{a^*}(i,j)|+|\overline{\nu}_{a^*}(i,j)-\nu_{a^*}(i,j)|\Big], \\
& \text{s.t.} 
\left\{
\begin{aligned}
& \sum_{i\neq j} (z_{ij}+ z_{ji})=2k, z_{ij} = z_{ji}\in \{0,1\},\\
& |\overline{\mu}_{a^*}(i,j)-\mu_{a^*}(i,j)| \le z_{ij},  |\overline{\nu}_{a^*}(i,j)-\nu_{a^*}(i,j)| \le z_{ij}, \\
& \overline{\mu}_{a^*}(i,j) = \overline{\nu}_{a^*}(j,i), 
\overline{\mu}_{a^*}(i,i) = \overline{\nu}_{a^*}(i,i) = 0.5, \\
& 0 \le \overline{\mu}_{a^*}(i,j) + \overline{\nu}_{a^*}(i,j) \le 1, 
0 \le \overline{\mu}_{a^*}(i,j), \overline{\nu}_{a^*}(i,j) \le 1,
\end{aligned}
\right.
\end{align*}
where $\rho = \frac{1}{|I| (|I| - 1)(|A| - 1)}$ is a balance parameter, 
$k$ denotes the number of adjusted preference pairs,  $k \in \mathbb{Z}$ and $1\leqslant k \leqslant \tfrac{|I|^2 - |I|}{2}$, 
$z_{ij}=1$ indicates that the  intuitionistic fuzzy preference towards the issue pair $(i,j)$ is adjusted, whereas $z_{ij}=0$ indicates that it remains unchanged, 
$\overline{\mu}_{a^*}(i,j)$ and $\overline{\nu}_{a^*}(i,j)$ represent the adjusted intuitionistic fuzzy preference degrees of agent $a^*$ towards the issue pair $(i,j)$, and $\overline{CM}(a^*, I)$ denotes the conflict degree between agent $a^*$ and all other agents over the entire issue set after adjustment.
\end{definition}

The function given by Definition~\ref{adjustment} enforces that exactly 
intuitionistic fuzzy preference degrees towards $k$ issue pairs are adjusted, thereby ensuring the effectiveness of the preference modification. Moreover, the optimization function simultaneously minimizes both the conflict degree and the magnitude of preference adjustments for agent 
$a^*$. The parameter 
$\rho$ guarantees that the adjustment magnitude and the conflict degree are of comparable scale, thereby ensuring a balanced contribution of the two terms to the overall objective.

Subsequently, we design an algorithm of constructing feasible strategies for conflict resolution.

\begin{algorithm}[H]
	\caption{Feasible strategies based on an adjustment mechanism.}
	\label{A3}
	\begin{algorithmic}[1]
		\Require $IFPS = (A, I, \{\widetilde{R}_a \mid a \in A\})$, $\kappa$, $k$, $S$;
		\Ensure $IFPS^{(t)}$;
		\State $t \gets 0$;
		\State $IFPS^{(t)} \gets IFPS$;
		\State Compute $CM^{(t)}(a,I)$ for each agent $a \in A$ by Definition~\ref{D19};
		\State Calculate $CM^{(t)}(A,I)$ by Definition~\ref{D};
		\While{$CM^{(t)}(A,I)>\kappa$}
		\State $a^* \gets \arg\operatorname*{max} \{CM^{(t)}(a,I)\mid a \in A\}$;
		\State Run the Simulated Annealing algorithm $S$ times and select the optimal result to adjust $\widetilde{R}^{(t)}_{a^*}$ by Definition~\ref{adjustment};
		\State Obtain the updated $IFPS^{(t+1)}$;
		\State Compute $CM^{(t+1)}(a,I)$ for each agent $a \in A$;
		\State Calculate $CM^{(t+1)}(A,I)$;
		\State $t \gets t + 1$;
		\EndWhile
		\State \Return $IFPS^{(t)}$.
	\end{algorithmic}
\end{algorithm}

Algorithm~\ref{A3} provides the procedure of constructing feasible strategies for conflict resolution within intuitionistic fuzzy preference-based conflict situations, where $t$ denotes the $t$-th iteration, and $S$ is the total number of runs.
After executing this algorithm $S$ times, the solution that satisfies all constraints and yields the minimum objective value is selected as the  feasible strategy.
Moreover, the time complexity of Steps 3 and 4 is \(O(|A|^2 \cdot |I|^2) \).
If $N$ is the number of iterations corresponding to Steps 5-12, and $M$ is the number of iterations in each Simulated Annealing process, the time complexity of each iteration is \(O(S\cdot M\cdot|A|^2 \cdot |I|^2+|A|^2 \cdot |I|^2)\), then the total time complexity is \(O(N\cdot S\cdot M\cdot|A|^2 \cdot |I|^2) \).

\begin{table}[htp]
\centering
\caption{The feasible strategy for the Middle East Conflict ($t=1$).}
\label{table strategy}
\setlength{\tabcolsep}{8pt}
\renewcommand{\arraystretch}{1.2}
\begin{tabular}{cccccc} 
\toprule
{No.} &$\widetilde{R}_{ a_1}(i,j)$ & $\widetilde{R}_{ a_1}(i,j) \rightarrow \overline{\widetilde{R}}_{ a_1}(i,j)$ & $\Delta \widetilde{R}_{a_1}(i,j)$ & $L^{(1)}$ & $CM^{(1)}$ \\
\midrule
\multirow{2}{*}{$1$} &
$\widetilde{R}_{ a_1}(i_1,i_5)$ & $(0.97,0.02) \rightarrow (0.40,0.48)$ & $(-0.57,0.46)$ & \multirow{2}{*}{$0.5984$} & \multirow{2}{*}{$0.5112$}\\
& $\widetilde{R}_{ a_1}(i_3,i_5)$ & $(0.90,0.07) \rightarrow (0.51,0.38)$ & $(-0.39,0.31)$ & & \\
\cmidrule(lr){1-6}
\multirow{2}{*}{$2$} &
$\widetilde{R}_{a_1}(i_3,i_4)$ & $(0.96,0.03) \rightarrow (0.47, 0.43)$ & $(-0.49, 0.40)$ & \multirow{2}{*}{$0.5969$} & \multirow{2}{*}{$0.5113$} \\
& $\widetilde{R}_{ a_1}(i_3,i_5)$ & $(0.90,0.07) \rightarrow (0.47,0.40)$ & $(-0.43,0.33)$ & & \\
\cmidrule(lr){1-6}
\multirow{2}{*}{$3$} &
$\widetilde{R}_{ a_1}(i_3,i_4)$ & $(0.96,0.03) \rightarrow (0.52, 0.38)$ & $(-0.44, 0.35)$ & \multirow{2}{*}{$0.5957$} & \multirow{2}{*}{$0.5108$} \\
& $\widetilde{R}_{a_1}(i_3,i_5)$ & $(0.90,0.07) \rightarrow (0.42,0.45)$ & $(-0.48,0.38)$ & & \\
\cmidrule(lr){1-6}
\multirow{2}{*}{$4$} &
$\widetilde{R}_{ a_1}(i_3,i_4)$ & $(0.96,0.03) \rightarrow (0.43, 0.45)$ & $(-0.53, 0.42)$ & \multirow{2}{*}{\textcolor{blue}{{$0.5943$}}} & \multirow{2}{*}{$0.5090$} \\
& $\widetilde{R}_{ a_1}(i_3,i_5)$ & $(0.90,0.07) \rightarrow (0.41,0.48)$ & $(-0.49,0.41)$ & & \\
\cmidrule(lr){1-6}
\multirow{2}{*}{$5$} &
$\widetilde{R}_{ a_1}(i_3,i_4)$ & $(0.96,0.03) \rightarrow (0.45, 0.44)$ & $(-0.51, 0.41)$ & \multirow{2}{*}{$0.5949$} & \multirow{2}{*}{$0.5098$} \\
& $\widetilde{R}_{ a_1}(i_3,i_5)$ & $(0.90,0.07) \rightarrow (0.40,0.42)$ & $(-0.50,0.35)$ & & \\
\cmidrule(lr){1-6}
\multirow{2}{*}{$6$} &
$\widetilde{R}_{a_1}(i_3,i_4)$ & $(0.96,0.03) \rightarrow (0.48, 0.41)$ & $(-0.48, 0.38)$ & \multirow{2}{*}{$0.5948$} & \multirow{2}{*}{$0.5098$} \\
& $\widetilde{R}_{a_1}(i_3,i_5)$ & $(0.90,0.07) \rightarrow (0.41,0.49)$ & $(-0.49,0.42)$ & & \\
\cmidrule(lr){1-6}
\multirow{2}{*}{$7$} &
$\widetilde{R}_{a_1}(i_3,i_4)$ & $(0.96,0.03) \rightarrow (0.51, 0.37)$ & $(-0.45, 0.34)$ & \multirow{2}{*}{$0.5959$} & \multirow{2}{*}{$0.5107$} \\
& $\widetilde{R}_{a_1}(i_3,i_5)$ & $(0.90,0.07) \rightarrow (0.41,0.47)$ & $(-0.49,0.40)$ & & \\
\cmidrule(lr){1-6}
\multirow{2}{*}{$8$} &
$\widetilde{R}_{a_1}(i_3,i_4)$ & $(0.96,0.03) \rightarrow (0.47, 0.42)$ & $(-0.49, 0.39)$ & \multirow{2}{*}{$0.5950$} & \multirow{2}{*}{$0.5100$} \\
& $\widetilde{R}_{a_1}(i_3,i_5)$ & $(0.90,0.07) \rightarrow (0.42,0.45)$ & $(-0.48,0.38)$ & & \\
\cmidrule(lr){1-6}
\multirow{2}{*}{$9$} &
$\widetilde{R}_{a_1}(i_3,i_4)$ & $(0.96,0.03) \rightarrow (0.45, 0.42)$ & $(-0.51, 0.39)$ & \multirow{2}{*}{$0.5969$} & \multirow{2}{*}{$0.5108$} \\
& $\widetilde{R}_{a_1}(i_3,i_5)$ & $(0.90,0.07) \rightarrow (0.41,0.39)$ & $(-0.49,0.32)$ & & \\
\cmidrule(lr){1-6}
\multirow{2}{*}{$10$} &
$\widetilde{R}_{a_1}(i_1,i_5)$ & $(0.97,0.02) \rightarrow (0.40, 0.48)$ & $(-0.57, 0.46)$ & \multirow{2}{*}{$0.5957$} & \multirow{2}{*}{$0.5094$} \\
& $\widetilde{R}_{a_1}(i_3,i_5)$ & $(0.90,0.07) \rightarrow (0.42,0.44)$ & $(-0.48,0.37)$ & & \\
\bottomrule 
\end{tabular}
\end{table}

\begin{example}\label{Feasiblestrategyconflictresolution}
(Continued from Example~\ref{example1}).
We utilize Pawlak's Middle East Conflict to illustrate that how to construct feasible strategies for conflict resolution as follows.
According to Definition~\ref{D}, the conflict degree of the intuitionistic fuzzy preference-based conflict situation in Example~\ref{example1} is computed by:
\begin{align*}
CM(A,I)=\frac{1}{6}\times(0.63+0.54+0.57+0.49+0.47+0.50)= 0.53.
\end{align*}
Then, by taking the parameters \(\kappa = 0.47\), \(k = 2\), and $S=10$, the procedure of conflict resolution proceeds through three iterations, with the simulated annealing algorithm executed ten times in each iteration to minimize the objective function. The corresponding results are summarized as follows.

\begin{table}[htp]
\centering
\caption{The feasible strategy for the Middle East Conflict.}
\label{table:strategy}
\setlength{\tabcolsep}{3pt}
\renewcommand{\arraystretch}{1.2}
\begin{tabular}{ccccccccc} 
\toprule
$t$ & $a^*$ &$\widetilde{R}_{ a^*}(i,j)$ & $\widetilde{R}_{ a^*}(i,j) \rightarrow \overline{\widetilde{R}}_{ a^*}(i,j)$ & $\Delta \widetilde{R}_{ a^*}(i,j)$ & $L^{(t)}$ & $CM^{(t)}$  & $(\alpha^*,\alpha_*)^{(t)}$ \\
\midrule
\multirow{2}{*}{1} & \multirow{2}{*}{$a_1$} &
$\widetilde{R}_{ a_1}(i_3,i_4)$ & $(0.96,0.03) \rightarrow (0.43, 0.45)$ & $(-0.53, 0.42)$ & \multirow{2}{*}{0.5943} & \multirow{2}{*}{0.5090} & \multirow{2}{*}{(0.43,0.55)} \\
& & $\widetilde{R}_{a_1}(i_3,i_5)$ & $(0.90,0.07) \rightarrow (0.41,0.48)$ & $(-0.49,0.41)$ & & & \\
\cmidrule(lr){1-8}
\multirow{2}{*}{2} & \multirow{2}{*}{$a_3$} &
$\widetilde{R}_{a_3}(i_3,i_5)$ & $(0.08,0.17) \rightarrow (0.41, 0.44)$ & $(0.33, 0.27)$ & \multirow{2}{*}{0.5073} & \multirow{2}{*}{0.4829} & \multirow{2}{*}{(0.45,0.57)} \\
& & $\widetilde{R}_{ a_3}(i_4,i_5)$ & $(0.15,0.05) \rightarrow (0.41,0.47)$ & $(0.26,0.42)$ & & & \\
\cmidrule(lr){1-8}
\multirow{2}{*}{3} & \multirow{2}{*}{$a_1$} &
$\widetilde{R}_{a_1}(i_1,i_5)$ & $(0.97,0.02) \rightarrow (0.43, 0.44)$ & $(-0.54, 0.42)$ & \multirow{2}{*}{0.5136} & \multirow{2}{*}{0.4612} & \multirow{2}{*}{(0.47,0.59)} \\
& & $\widetilde{R}_{ a_1}(i_4,i_5)$ & $(0.95,0.01) \rightarrow (0.43,0.46)$ & $(-0.52,0.45)$ & & & \\
\bottomrule 
\end{tabular}
\end{table}

\begin{enumerate}[label={(\arabic*)}]
\item  Table \ref{table strategy} illustrates the outcomes of each Simulated Annealing algorithm run in the first iteration. Among these runs, the fourth achieves the minimum objective value $L^{(1)}=0.5943$, and is therefore selected as the final result for this iteration. In Table~\ref{table:strategy},  $\widetilde{R}_{ a^*}(i,j)$ denotes the preference element requiring adjustment, $\widetilde{R}_{ a^*}(i,j) \rightarrow \overline{\widetilde{R}}_{ a^*}(i,j)$ represents the corresponding feasible strategy, and $\Delta \widetilde{R}_{ a^*}(i,j)$ indicates the adjustment magnitude. Furthermore, 
Table~\ref{table:strategy} summarizes the results obtained by applying  Algorithm~\ref{A3} to the Middle East Conflict. As the number of iterations increases, the conflict degree  decreases progressively, indicating that the proposed adjustment mechanism is effective for mitigating conflicts.
\item The adjusted intuitionistic fuzzy preferences are abbreviated below, with the updated values highlighted in blue. 
{\setlength{\arraycolsep}{1.9pt}
\[
\begin{array}{ccc}
\widetilde{R}_{a_1}^{\mu} =
\begin{bmatrix}
0.50 & 0.90 & 0.93 & 0.89 & \textcolor{blue}{0.43} \\
0.03 & 0.50 & 0.95 & 0.90 & 0.93 \\
0.05 & 0.04 & 0.50 &\textcolor{blue}{0.43} & \textcolor{blue}{0.41} \\
0.06 & 0.08 & \textcolor{blue}{0.45} & 0.50 & \textcolor{blue}{0.43} \\
\textcolor{blue}{0.44} & 0.05 & \textcolor{blue}{0.48} & \textcolor{blue}{0.46 } & 0.50
\end{bmatrix},
&
\widetilde{R}_{a_3}^{\mu} = 
\begin{bmatrix}
0.50 & 0.10 & 0.88 & 0.12 & 0.85 \\
0.89 & 0.50 & 0.10 & 0.09 & 0.13 \\
0.10 & 0.89 & 0.50 & 0.10 & \textcolor{blue}{0.41} \\
0.86 & 0.90 & 0.87 & 0.50 & \textcolor{blue}{0.41} \\
0.10 & 0.07 & \textcolor{blue}{0.44} & \textcolor{blue}{0.47} & 0.50
\end{bmatrix}.
\end{array}
\]
}
Moreover, Table~\ref{tab:trisection changes} compares the trisection of agent pairs before and after the implementation of the feasible strategy. The results indicate that the number of conflict agents either decreases or remains unchanged, while most neutrality agents are reduced and all alliance agents increase. These findings provide empirical evidence of the effectiveness of the proposed feasible strategy in alleviating conflicts.

\begin{table}[htp]
\centering
\caption{Changes in trisections of agent pairs ($\alpha_*=0.47,\ \alpha^*=0.59$).}
\label{tab:trisection changes}
\setlength{\tabcolsep}{6pt} 
\renewcommand{\arraystretch}{1.2}
\begin{tabularx}{\textwidth}{llll}
\toprule
$A$ & $\mathbb{IR}^{\asymp}_{I}(a)$ & $\mathbb{IR}^{\approx}_{I}(a)$& $\mathbb{IR}^{=}_{I}(a)$ \\
\midrule
$a_1$ & $\{a_2, a_3,a_5, a_6\}\rightarrow \textcolor{blue}{\{a_2\}}$ & $\emptyset\rightarrow \textcolor{blue}{\{a_3\}} $ & $\{a_1, a_4\}\rightarrow \textcolor{blue}{\{a_1,a_4,a_5,a_6\}} $ \\
$a_2$ & $\{a_1\}$ & $\{a_3,a_4, a_5\}\rightarrow \textcolor{blue}{\{a_4\}} $ & $\{a_2,a_6\}\rightarrow \textcolor{blue}{\{a_2,a_3,a_5,a_6\}} $ \\
$a_3$ & $\{a_1, a_4,a_6\}\rightarrow \textcolor{blue}{\{a_4\}} $ & $\{a_2,a_5\}\rightarrow \textcolor{blue}{\{a_1,a_6\}}  $ & $\{a_3\}\rightarrow \textcolor{blue}{\{a_2,a_3,a_5\}} $ \\
$a_4$ & $\{a_3\}$ & $\{a_2,a_6\}\rightarrow \textcolor{blue}{\{a_2\}}$ & $\{a_1,a_4,a_5\}\rightarrow \textcolor{blue}{\{a_1,a_4,a_5,a_6\}}$ \\
$a_5$ & $\{a_1\}\rightarrow \textcolor{blue}{\emptyset}$ & $\{ a_2,a_3,a_6\}\rightarrow \textcolor{blue}{\emptyset}$ & $\{a_4,a_5\}\rightarrow \textcolor{blue}{\{a_1,a_2,a_3,a_4,a_5,a_6\}}$ \\
$a_6$ & $\{a_1,a_3\}\rightarrow \textcolor{blue}{\emptyset}$ & $\{ a_4,a_5\}\rightarrow \textcolor{blue}{\{a_3\}}$ & $\{a_2, a_6\}\rightarrow \textcolor{blue}{\{a_1,a_2,a_4,a_5,a_6\}}$ \\
\bottomrule
\end{tabularx}
\end{table}

\item Tables~\ref{tab trisection_changes} and \ref{tab trisectionissue_changes} illustrates the evolution of the trisections of the agent set and the issue set across all iterations with thresholds $(\beta_*,\beta^*)=(0.47,0.59)$, and $(\gamma_*,\gamma^*)=(0.47,0.59)$, respectively. The feasible strategies induce substantial changes in the trisection structures. Within the agent set, agent $a_2$ is downgraded to a weak conflict agent, while agents $a_1$ and $a_3$ shift from strong conflict to non-conflict agents. Similarly, agents $a_4$, $a_5$ and $a_6$ move from weak conflict to non-conflict agents. Similarly, within the issue set, issue $i_2$ is downgraded from a strong to a weak conflict issue, issues $i_1$ and $i_3$ are resolved to non-conflict issues, and issues $i_4$ and $i_5$ also become non-conflict issues. These results demonstrate the effectiveness of the proposed feasible strategy in progressively mitigating conflicts.

\begin{table}[htp]
\centering
\caption{Changes in trisections of the agent set across iterations.}
\label{tab trisection_changes}
\setlength{\tabcolsep}{10pt} 
\renewcommand{\arraystretch}{1.2}
\begin{tabularx}{\textwidth}{lllll}
\toprule 
TS& t=0 & t=1 & t=2 & t=3 \\
\midrule
$\mathbb{SA}(I)$ & $\{a_1,a_2,a_3\}$ & $\{a_1,a_3\}$ & $\emptyset$ & $\emptyset$ \\
$\mathbb{WA}(I)$ & $\{a_4,a_5,a_6\}$ & $\{a_2,a_4,a_5,a_6\}$ & $\{a_1,a_2,a_3,a_4,a_6\}$ & $\{a_2\}$ \\
$\mathbb{NA}(I)$ & $\emptyset$ & $\emptyset$ & $\{a_5\}$ & $\{a_1,a_3,a_4,a_5,a_6\}$ \\
\bottomrule
\end{tabularx}
\end{table}

\begin{table}[htp]
	\centering
	\caption{Changes in trisections of the issue set across iterations.}
	\label{tab trisectionissue_changes}
	\setlength{\tabcolsep}{15pt} 
	\renewcommand{\arraystretch}{1.2}
	\begin{tabularx}{\textwidth}{lllll}
		\toprule 
		TS& t=0 & t=1 & t=2 & t=3 \\
		\midrule
		$\mathbb{SI}(A)$ & $\{i_2,i_4,i_5\}$ & $\{i_5\}$ & $\emptyset$ & $\emptyset$ \\
		$\mathbb{WI}(A)$ & $\{i_1,i_3\}$ & $\{i_1,i_2,i_3,i_4\}$ & $\{i_1,i_2,i_4,i_5\}$ & $\{i_2\}$ \\
		$\mathbb{NI}(A)$ & $\emptyset$ & $\emptyset$ & $\{i_3\}$ & $\{i_1,i_3,i_4,i_5\}$ \\
		\bottomrule
	\end{tabularx}
\end{table}

\item The evolution of trisection cardinalities for both the agent and issue sets is illustrated in Figures~\ref{fig:agent3a} and \ref{fig:agent3b}, respectively. The observed trends reveal a substantial reduction in the strong conflict agents accompanied by a corresponding increase in the non-conflict agents.
Moreover, Figures~\ref{fig:agent} and \ref{fig:issue} depict the evolution of conflict degrees. It is evident that the conflict degrees of both agents and issues decrease following the implementation of the feasible strategy. Notably, agents 
$a_1$ and $a_3$
undergo multiple preference modifications, leading to a more pronounced reduction in their conflict degrees relative to other agents.
These results demonstrate that the proposed feasible strategy is effective in mitigating conflicts and facilitating conflict resolution.

\item The influence of parameter $\sigma$ on the thresholds $\alpha_*$ and $\alpha^*$ is illustrated by Figure~\ref{fig:sensitivity}, in which \(\alpha_*\) increases monotonically with \(\sigma\), whereas \(\alpha^*\) exhibits an opposite trend, decreasing as \(\sigma\) increases. When the parameter \(\sigma\) approaches $0.5$, the thresholds \(\alpha_*\) and \(\alpha^*\) become nearly identical. Concurrently, the increase of the threshold \(\sigma\) leads to a rise of the number of strong and non-conflict agents for each agent, while the number of weak conflict agents correspondingly decreases.

\begin{figure}[htp]
\begin{minipage}{0.48\textwidth}
\centering
\includegraphics[width=\textwidth]{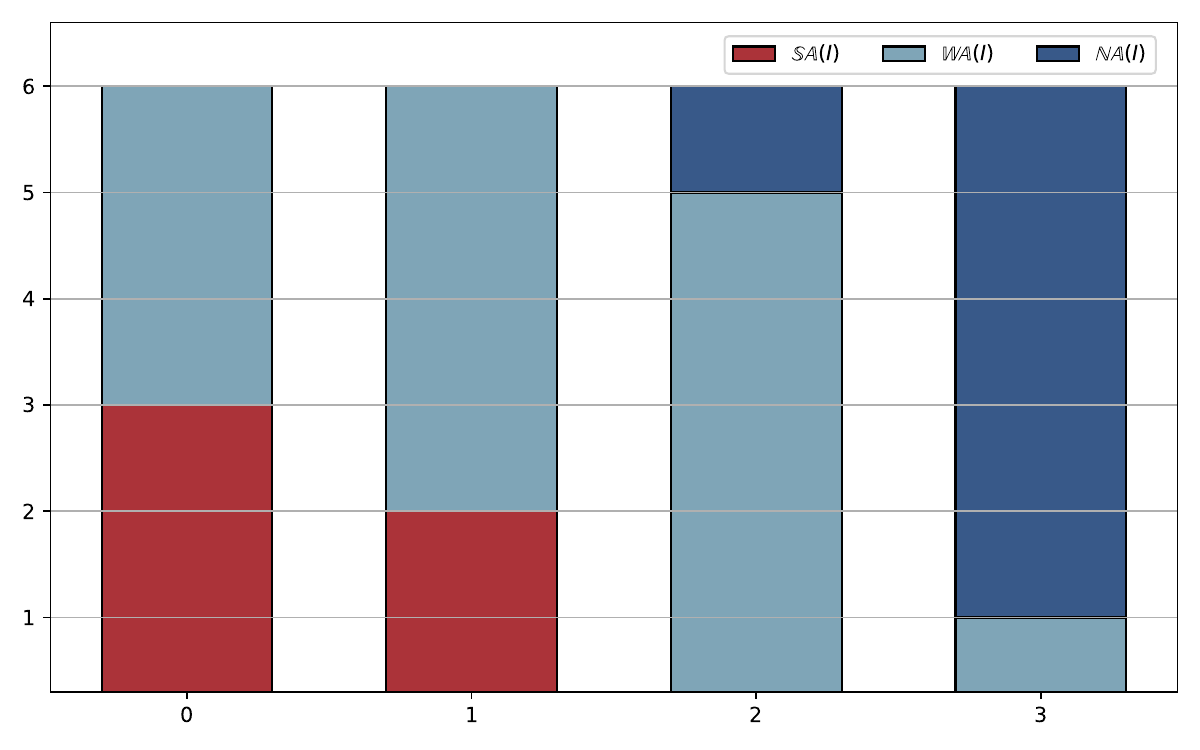}
\captionof{figure}{Changes of strong, weak, and non- conflict agents in each iteration.}
\label{fig:agent3a}
\end{minipage}\hfill
\begin{minipage}{0.48\textwidth}
\centering
\includegraphics[width=\textwidth]{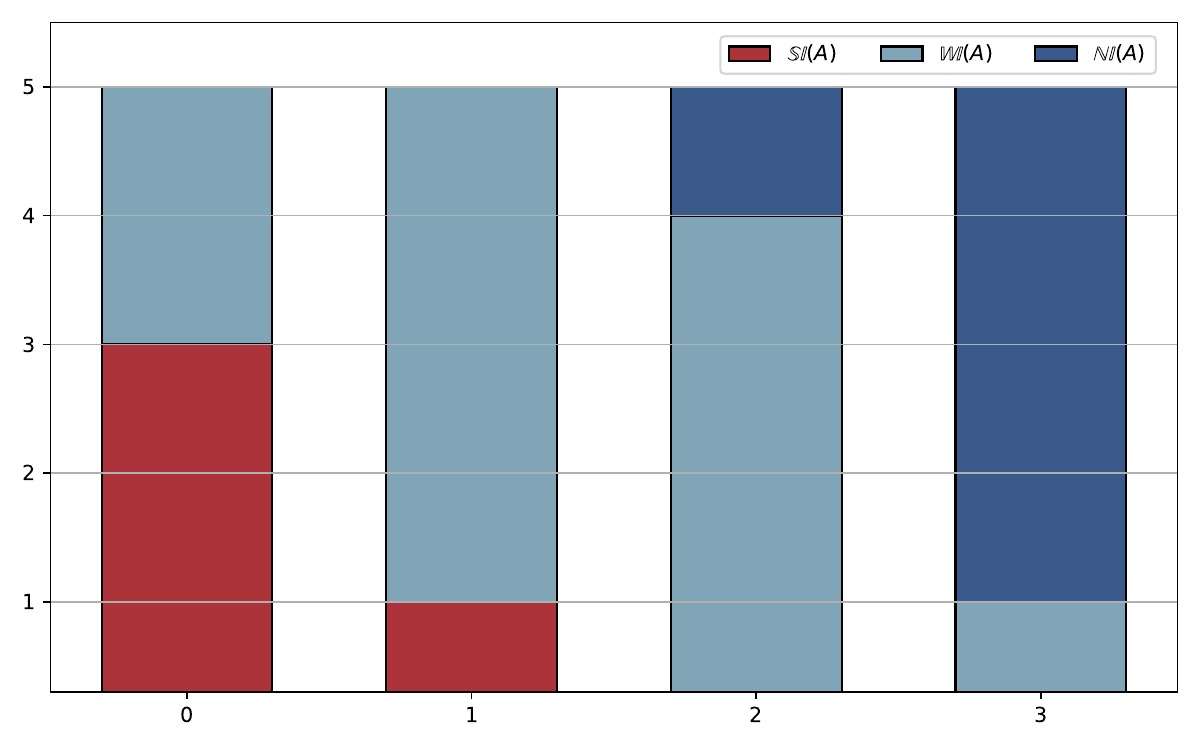}
\captionof{figure}{Changes of strong, weak, and non- conflict issues in each iteration.}
\label{fig:agent3b}
\end{minipage}
\end{figure}

\begin{figure}[htp]
\begin{minipage}{0.48\textwidth}
\centering
\includegraphics[width=\textwidth]{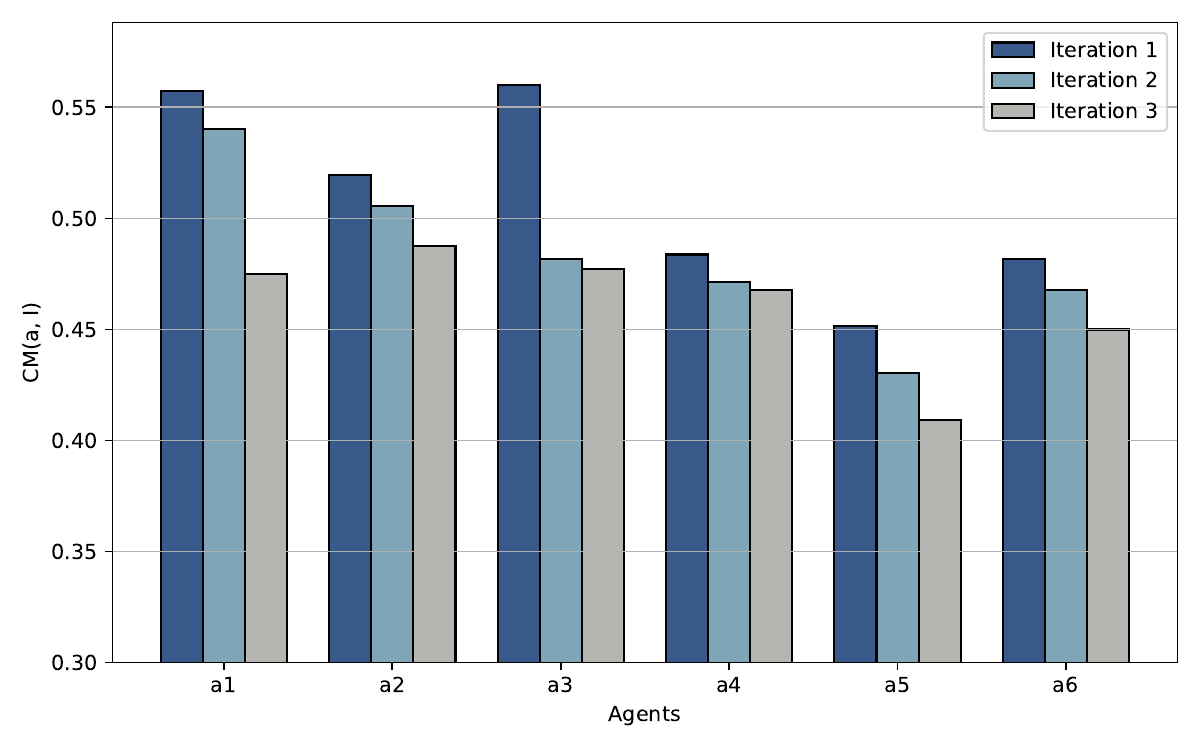}
\captionof{figure}{Changes of the conflict degree of an individual agent.}
\label{fig:agent}
\end{minipage}\hfill
\begin{minipage}{0.48\textwidth}
\centering
\includegraphics[width=\textwidth]{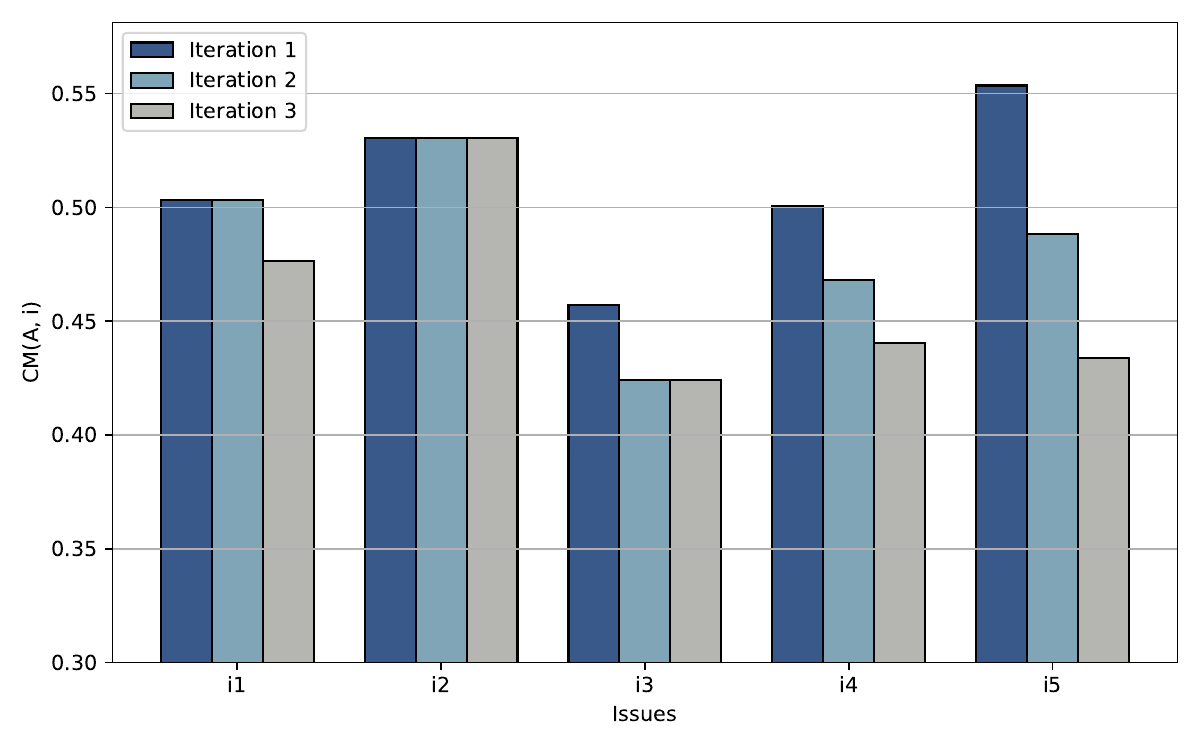}
\captionof{figure}{Changes of the conflict degree towards an individual issue.}
\label{fig:issue}
\end{minipage}
\end{figure}
\end{enumerate}		
\end{example}

\begin{figure}[htp]
\centering
\includegraphics[width=0.8\textwidth]{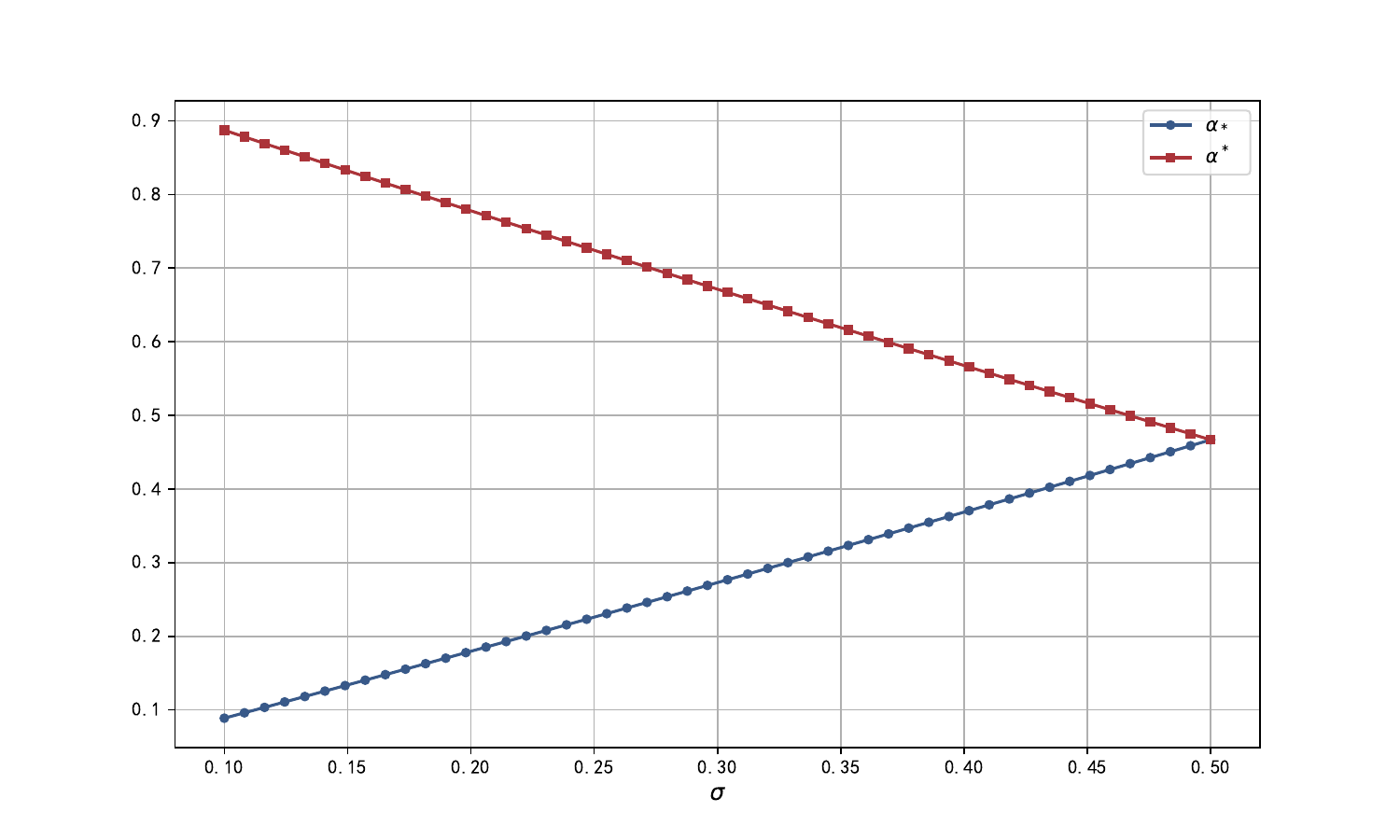}
\caption{The influence of parameter $\sigma$ on the thresholds $\alpha_*$ and $\alpha^*$.}
\label{fig:sensitivity}
\end{figure}

Finally, we compare the proposed model with existing approaches from three perspectives, namely, Agents' Attitudes, Threshold Calculation, and Conflict Resolution, and summarize the comparative results in Table~\ref{2} as follows.
\begin{enumerate}[label={(\arabic*)}]
	\item {\bf Agents' Attitudes:}
	Numerous studies~\cite{yao2019three,li2023three,xu2022selection, liu2025feasible, tong2021trust, gao2025three,lang2024trisection, Yang} model agents' attitudes using individual ratings on single issues, thereby overlooking heterogeneity in agents' interpretations of these ratings. To address this limitation, Hu's model~\cite{hu2025three} replaced ratings with preferences. However, due to the binary nature of classical preferences, it remains difficult to capture variations in the intensity of preferences shared by different agents. Lang's model~\cite{lang2025ija} and Shang's model~\cite{shang2025three} utilized multi-level preference and fuzzy preference to characterize agents' attitudes towards issue pairs, respectively.
	The proposed model employs intuitionistic fuzzy preferences to characterize agents' attitudes, enabling a more expressive representation of preferences from two complementary perspectives.
	
	\item  {\bf Threshold Calculation:}
	Two thresholds are commonly used to trisect the set of agent pairs. In existing studies \cite{yao2019three,li2023three, xu2022selection,  liu2025feasible, tong2021trust,Yang, hu2025three,lang2025ija,shang2025three}, thresholds are determined subjectively, whereas Gao's model~\cite{gao2025three} derives the thresholds by the given conflict function, and Lang's model~\cite{lang2024trisection} derives thresholds based on loss functions, which involve a relatively large number of parameters. The proposed model determines thresholds using relative loss functions derived from the proposed conflict functions, thereby reducing parameter complexity while maintaining objectivity.
	
	\item {\bf Conflict Resolution:}
	Conflict resolution is a central objective of three-way conflict analysis. However, several existing models, including those of Yao~\cite{yao2019three}, Lang~\cite{lang2024trisection}, and  and Hu~\cite{hu2025three}, do not explicitly address conflict resolution. Xu's model~\cite{xu2022selection} identifies multiple issues with high consistency as feasible strategies, while Li's model~\cite{li2023three} and Yang's model~\cite{Yang} focus on multiple issues with low conflict degrees. These approaches, however, do not adequately address issues characterized by both low consistency and high conflict. In contrast, Tong's model~\cite{tong2021trust} adopts a trust-based mechanism to adjust agents' ratings, and Gao's model~\cite{gao2025three} proposes preference adjustment methods for conflict resolution. Building upon these efforts, the proposed model introduces an iterative optimization-based approach that simultaneously accounts for both adjustment magnitude and conflict degree, thereby providing a more intuitive and effective framework for conflict resolution.
\end{enumerate}

\begin{sidewaystable}
\centering
\small
\setlength{\tabcolsep}{12pt}
\caption{Comparison of the proposed model and other models.}
\label{2}
				
\begin{threeparttable}
\begin{tabular}{llll}
\toprule
$\mathrm{Models}$ &
Agents'\ attitude &
$\mathrm{Thresholds}$ &
$\mathrm{Feasible\ strategy}$ \\
\midrule
Yao's model~\cite{yao2019three} &
$\mathrm{Three\text{-}valued,\ many\text{-}valued}$ &
$\mathrm{Subjectivity}$ &$\times$ \\
Li's\ model~\cite{li2023three} &
$\mathrm{q\text{-}rung\ orthopair\ fuzzy\ number}$ &
$\mathrm{Subjectivity}$ &
$\mathrm{Non-adjustment}$ \\
Xu's\ model~\cite{xu2022selection} &
$\mathrm{Three\text{-}valued,\ fuzzy\ number}$ &
$\mathrm{Subjectivity}$ &
$\mathrm{Non-adjustment}$ \\
Liu's model~\cite{liu2025feasible} &
$\mathrm{Three\text{-}valued}$ &
$\mathrm{Subjectivity}$ &
$\mathrm{Non-adjustment}$ \\
Tong's\ model~\cite{tong2021trust} &
$\mathrm{Pythagorean\ fuzzy\ number}$ &
$\mathrm{Subjectivity}$ &
$\mathrm{Adjustment}$ \\
Gao's model~\cite{gao2025three} &
$\mathrm{Interval\ set}$ &
$\mathrm{Objective}$ &
$\mathrm{Adjustment}$ \\
Lang's model~\cite{lang2024trisection} &
$\mathrm{Pythagorean\ fuzzy\ number}$ &
$\mathrm{Objective}$ &
$\times$ \\
Yang's model~\cite{Yang} &
$\mathrm{Various\ fuzzy\ numbers}$ &
$\mathrm{Subjectivity}$ &
$\mathrm{Non-adjustment}$ \\
Hu's model~\cite{hu2025three} &
$\mathrm{Preference}$ &$\mathrm{Subjectivity}$ &
$\times$ \\
Lang's model~\cite{lang2025ija} &
$\mathrm{Multi-level\ preference}$ &$\mathrm{Subjectivity}$ &
$\times$ \\
Shang's model~\cite{shang2025three} &
$\mathrm{Fuzzy\ preference}$ &$\mathrm{Subjectivity}$ &
$\times$ \\
The proposed model &
$\mathrm{Intuitionistic\ fuzzy\ preference}$ &
$\mathrm{Objective}$&
$\mathrm{Adjustment}$\\
\bottomrule
\end{tabular}
\begin{tablenotes}
\footnotesize
\item \textit{Note:} Static: consensus without rating changes; Dynamic: consensus through rating adjustment.
\end{tablenotes}
\end{threeparttable}
\end{sidewaystable}

\section{Conclusion}	\label{sec:Conslusion}
The characterization of conflicts and feasible strategies for conflict resolution are two important topics of three-way conflict analysis.
In this paper, we first introduced intuitionistic fuzzy preference-based conflict situations, enabling a more fine-grained representation of agents' attitudes towards issue pairs than that provided by classical preference-based conflict situations. Afterwards, we developed intuitionistic fuzzy preference-based conflict measures and corresponding three-way conflict analysis models to trisect agent pairs, the agent set, and the issue set. Meanwhile, a threshold calculation method based on conflict functions was proposed to support the implementation of three-way conflict analysis models. Additionally, adjustment mechanism-based feasible strategies for conflict resolution were formulated by jointly considering the adjustment magnitude and the conflict degree. Finally, we provided an illustrative example to demonstrate the validity and effectiveness of the proposed approach.

In the future, we will focus on developing more effective feasible strategies for conflict resolution within intuitionistic fuzzy preference-based conflict situations, along with efficient algorithms for computing such strategies. In addition, the proposed framework will be extended to hybrid preference-based conflict situations with more constraints. Further work will also consider incorporating the weights of agents and issues to design feasible strategies for conflict resolution in complex conflict situation tables.

%
%
%
%
%

\section*{Acknowledgments}
This work was supported by National Natural Science Foundation of China (No. 12501630, 62076040, 12471431), the Scientific Research Fund of Hunan Provincial Education Department (No. 22A0233), a Discovery Grant from the Natural Sciences and Engineering Research Council of Canada (No. RGPIN-2024-04410), and Shandong Provincial Natural Science Foundation (No. ZR2022QG033).

\bibliographystyle{unsrt} 
\bibliography{Manuscript}
\end{sloppypar}
\end{document}